\newcommand{\centre}{centre}
\renewcommand{\cite}{\citet}
\newcommand{\eps}{\varepsilon}
\newcommand{\calA}{\mathcal{A}}
\newcommand{\calM}{\mathcal{M}}
\newcommand{\calR}{\mathcal{R}}
\newcommand{\calU}{\mathcal{U}}
\newcommand{\opt}{\mathcal{C}}
\newcommand{\loss}{\ell}
\newcommand{\tS}{\tilde{S}}
\newcommand{\calH}{\mathcal{H}}
\newcommand{\twonorm}[1]{\Vert #1 \Vert_2}
\theoremstyle{plain}
\newtheorem{theorem}{Theorem}[section]
\newtheorem{lemma}[theorem]{Lemma}
\newtheorem{corollary}[theorem]{Corollary}
\newcounter{sideremark}
\newcommand{\reals}{\mathbb{R}}
\newcommand{\NP}{\mathsf{NP}}
\newcommand{\APX}{\mathsf{APX}}
\newcommand{\poly}{\mathrm{poly}}
\newcommand{\apxsol}{\mathcal{S}^{(t)}}
\newcommand{\setS}[1]{S^{(#1)}}
\newcommand{\charALG}{$\calA_{cop}$}
\newcommand{\sample}{A}
\newcommand{\subcoreset}{Q}
\newcommand{\chenCoreset}{\subcoreset}
\newcommand{\reffig}{\ref}
\newcommand{\algref}{\ref}
\newcommand{\refsec}{\ref}
\newcommand{\bbE}{\mathbb{E}}
\newcommand{\kmeans}{$k$-means}
\newcommand{\kmeanspp}{$k$-means\texttt{++}}
\newcommand{\MNIST}{MNIST}
\newcommand{\MWUA}{\textsf{MWUA}}
\newcommand{\NPHARD}{\textsf{NP}-hard}
\newcommand{\MTMW}{\textsf{MTMW}} 
\newcommand{\HRD}{\textsf{HRD}}
\newcommand{\FTL}{\textsf{FTL}}
\newcommand{\MWUAFTL}{\textsf{MWUA-FTL}}
\renewcommand{\c}[1]{C_{#1}}
\newcommand{\cstar}[1]{\c{#1}^{*}}
\title{Online $k$-means Clustering}
\author{%
  Vincent Cohen-Addad \\
  CNRS\\
  \texttt{vcohenad@gmail.com} \\
  \and
  Benjamin Guedj \\
  Inria and University College London \\
  \texttt{benjamin.guedj@inria.fr} \\
  \and
  Varun Kanade \\
  University of Oxford \\
  \texttt{varunk@cs.ox.ac.uk} \\
  \and
  Guy Rom \\
  University of Oxford \\
  \texttt{guy.rom@cs.ox.ac.uk} \\
}
\begin{document}

\maketitle

\begin{abstract}
	We study the problem of online clustering where a clustering algorithm has to assign a new point that arrives to one of $k$ clusters. The specific formulation we use is the \kmeans{} objective: At each time step the algorithm has to maintain a set of $k$ candidate centers and the loss incurred is the squared distance between the new point and the closest center. The goal is to minimize regret with respect to the best solution to the \kmeans{} objective ($\opt$) in hindsight. We show that provided the data lies in a bounded region, an implementation of the Multiplicative Weights Update Algorithm (\MWUA{}) using a discretized grid achieves a regret bound of $\tilde{O}(\sqrt{T})$ in expectation. We also present an online-to-offline reduction that shows that an efficient no-regret online algorithm (despite being allowed to choose a different set of candidate \centre{}s at each round) implies an offline efficient algorithm for the \kmeans{} problem. In light of this hardness, we consider the slightly weaker requirement of comparing regret with respect to $(1 + \epsilon) \opt$ and present a no-regret algorithm with runtime $O\left(T(\mathrm{poly}(\log(T),k,d,1/\epsilon)^{k(d+O(1))}\right)$. Our algorithm is based on maintaining an incremental coreset and an adaptive variant of the \MWUA{}.  We show that na\"{i}ve online algorithms, such as \emph{Follow The Leader}, fail to produce sublinear regret in the worst case. We also report preliminary experiments with synthetic and real-world data.
\end{abstract}

\section{Introduction}
\label{sec:intro}
Clustering algorithms are one of the main tools of unsupervised learning
and often form a key part of a data analysis pipeline. Unlabeled data is
ubiquitous in the real world and discovering structure in such data is
essential in many online applications. The focus of this work is on the
\emph{online} setting where data elements arrive one at a time and need to be
assigned to a cluster (either new or existing) without the benefit of having
observed the entire sequence. While several objective functions for clustering
exist, in our work we will focus on the \kmeans{} objective. Most of our
results can be easily generalized to most \centre{}-based objectives.

The analysis of online algorithms comes in two flavours involving bounding
either the \emph{competitive ratio}, or the \emph{regret}. 
The online algorithm makes irrevocable decisions and its
performance is measured by the value the objective function. The competitive
ratio is the ratio between the value achieved by the online algorithm and the
best offline solution (for minimization problems). In the case of clustering,
without strong assumptions on the aspect-ratio of the instance no algorithms
with non-trivial bounds on the competitive ratio can be designed.
In \emph{regret analysis}, the difference between the
value of the objective function of the online algorithm and the best offline
solution (in hindsight) is sought to be bounded by a function that grows
sublinearly with the number of data elements. We consider \emph{regret
analysis} in this paper.

More precisely, in the case of online \kmeans{} clustering, the online
algorithm at time $t$ maintains a set of $k$ candidate cluster \centre{}s, $C_t =
\{ c_{t,1}, \ldots, c_{t, k} \}$ before observing the datum $x_t$ that arrives
at time $t$. The loss incurred by the algorithm at time $t$ is $\ell_t(C_t,
x_t) = \min_{c \in C_t} \twonorm{x_t - c}^2$. The \emph{regret} is the
difference between the cumulative loss of the algorithm over $T$ time steps and
the optimal fixed solution in hindsight, i.e. 
\[
	\sum_{t=1}^T \ell(C_t, x_t) - \min_{C: |C| = k} \sum_{t=1}^T \min_{c \in C} \twonorm{x_t - c}^2.
\]

\subsection{Our Contributions}

We consider the setting where the data $x_t$ all lie in the unit box in $[0,
1]^d \subset \reals^d$. We summarise our contributions below.
\begin{itemize}[itemsep=0.2em, leftmargin=1em, label=-]
	\item A multiplicative weight-update algorithm (\MWUA{}) over sets of size $k$ of candidate \centre{}s drawn from a uniform grid over $[0, 1]^d$ achieves expected regret $\tilde{O}(\sqrt{T})$; here the $\tilde{O}$ notation hides factors that are poly-logarithmic in $T$ and polynomial in $k$ and $d$. The algorithm and its analysis is along standard lines and the algorithm is computationally inefficient. Nevertheless, this algorithm establishes that information-theoretically achieving $\tilde{O}(\sqrt{T})$ regret is possible. 
	\item We provide an online-to-offline reduction that shows that any online
		algorithm that runs in time $f(t, k, d)$ at time $t$, yields an offline
		algorithm that solves the \kmeans{} problem to additive accuracy $\epsilon$ in
		time polynomial in $n$, $k$, $d$, $1/\epsilon$, $f(t, k, d)$. In
		particular, for an offline instance of \kmeans{} with $n$ point in a
		bounded region with $\opt \geq 1/\mathrm{poly}(n)$, an online algorithm
		with polynomial run time would yield a fully poly-time approximation
		scheme. We note that there exist hard instances for \kmeans{} for which
		$\opt \geq 1/\poly(n)$ and that it is known that \kmeans{} is
		$\APX$-hard \citep{awasthi2015hardness}. Furthermore, all known (approximation) FPT algorithms for
		\kmeans{} are exponential in at least one of the two parameters $k$ and
		$d$. This suggests that we need to relax performance requirements for
		efficient algorithms.
	\item We consider a weaker notion of regret called $(1 + \epsilon)$-regret. Let $\opt$ denote the loss of the best solution in hindsight and $L_T$ the cumulative loss of the algorithm; in the definition of regret, instead of $L_T - \opt$, we consider $L_T - (1 + \epsilon)\opt$. With this notion of regret, we provide an algorithm that achieves $(1+\epsilon)$-regret $\tilde{O}(\sqrt{T})$ and runs in time $O\left(T(d^{1/2} k^2 \epsilon^{-2}\log(T))^{k(d+O(1))}\right)$.
	\item Finally, we consider online algorithms which have oracle access to \kmeans{} solver. For instance, this allows the us to implement \emph{follow the leader}. We show that there exists a sequence of examples for which follow the leader has \emph{linear} regret. We show that this construction indeed results in linear regret in simulations. We observe that \FTL{}~(using \kmeans{}++ as a proxy for oracle) works rather well on real-world data. 
\end{itemize}

\subsection{Related Work}

Clustering has been studied from various perspectives, e.g. combinatorial optimization, probabilistic modelling, and there are several widely used algorithms such as Lloyd's local search algorithm with \kmeans{}++ seeding, spectral methods, the EM algorithm, etc. 
We will restrict discussion mainly to the clustering as combinatorial optimization viewpoint. The \kmeans{} objective is one of the family of \centre{}-based objectives which uses the squared distance to the \centre{} as a measure of variance. Framed as an optimization problem, the problem is $\NP$-complete. As a result, theoretical work has focused on approximation and FPT algorithms. 

A related model to the online framework is the streaming model. As in the online model, the data is received one at a time. The focus in the streaming framework is to have extremely low memory footprint and the algorithm is only required to propose a solution once the stream has been exhausted. In contrast, in the online setting the learning algorithm has to make a decision at each time step and incur a corresponding loss. Coresets are widely used in computational geometry to obtain approximation algorithms. A coreset for \kmeans{} is a mapping of the original data to a subset of the data, along with a weight function, such that the \kmeans{} cost of partitions of the data is preserved up to some small error using the given mapping and weights.

Online learning with experts and related problems have been widely studied, see
e.g.~\cite{cesa2006prediction} and references therein. The
\emph{Multiplicative Weight Update Algorithm (\MWUA{})} is a widely studied
algorithm that may be used for regret minimization in the prediction with
expert advice setting. It maintains a distribution over experts that changes as
new data points arrive, which is used to sample an advice of some expert to be
used as the next prediction, resulting in low regret. For a thorough survey see
\cite{arora2012multiplicative}. \FTL{} is a simple online algorithm that always predicts
the best solution for the data witnessed so far. It is known to admit low regret for some problems,
namely, strongly convex objectives \citep{shalev2012online}. Variants of \FTL{}
that optimize a regularized objective have been successfully utilized for a wider range of settings. 

The closest related work to ours is the work of \cite{das2008onlclas}. He defines the evaluation framework we use here, and presents a na\"ive greedy algorithm to adress it, with no analysis. In addition, he combines algorithms by \cite{charikar2004incremental} and \cite{beygelzimer2006cover} that together maintain a set of constant approximations of the $k$-\centre{} objective at any time, for a range of values for $k$.

\cite{choromanska2012online} study the online clustering problem in the presence of experts. The experts are batch clustering algorithms that output the \centre{} closest to the next point at each step (hence provide implicit information on the next point in the stream). Using experts that have approximation guarantees for the batches they obtain an approximate regret guarantee of $\log(T)$ for the stream. Our setting differs in that we must commit to the next cluster \centre{}s strictly before the next point in the stream is observed, or any implicit information about it.

\cite{li2018quasi} provide a generalized Bayesian adaptive online clustering algorithm (built upon the PAC-Bayes theory). They describe a Gibbs Sampling procedure of $O(k)$ \centre{}s and prove it has a minimax sublinear regret. They present a reversible jump MCMC process to sample these \centre{}s with no theoretical mixing time analysis.

\cite{moshkovitz2019unexpected} studies a similar problem that considers only data points as candidate cluster \centre{}s, and the offline solution is defined similarly (an $\ell_2$ analog of the $k$-medoids problem). Furthermore, the algorithm starts with an empty set of cluster \centre{}s and must output an incremental solution-- cluster centers are only added to the set, and this is done in a streaming fashion. The loss is measured in hindsight, using the final set of cluster \centre{}s. They provide tight bounds for both adversarial and randomly ordered streams, and show that knowing the length of the stream reduces the amount of \centre{}s required to obtain a constant approximation by a logarithmic factor.

\cite{liberty2016algorithm} handle a different definition of \emph{online}-- the data points are labeled in an online fashion but the loss is calculated according to the centroids of the final clusters. They allow $O(k\cdot f(n,\gamma))$ clusters where $\gamma$ is the aspect ratio of the data (the ratio between the diameter of the set and the closest distance between two points), and guarantee a constant competitive ratio when compared to the best $k$ clusters in hindsight.

\cite{meyerson2001online} studies \emph{online facility location}, where one maintains a set of facilities at each time step, and suffers a loss which is the distance of the new point to the closest facility. Once a facility is located, it cannot be moved, and placing a new one incurs a loss. Meyerson presents an algorithm that has a constant competitive ratio on randomly ordered streams. For adversarial order he presents an algorithm with $\log(T)$ competitive ratio and provides a lower bound showing no algorithm can have a constant competitive ratio.

\section{Basic Results}
\label{sec:basic-results}
\vspace{-2mm}
\subsection{Preliminaries and Notation}
\label{sec:prelims}
To precisely define the online clustering problem we will consider points in the unit box $[0, 1]^d \subseteq \reals^d$. The point
arriving at time $t$ will be denoted by $x_t$ and we use $X_{1:t-1}$ to denote
the data received before time $t$. The learning algorithm must output a set
$\c{t}$ of $k$ candidate \centre{}s using only $X_{1:t-1}$. We refer to the set of
all the candidate \centre{}s an algorithm is considering as \emph{sites}. The loss incurred by the
algorithm at time $t$ is $\loss(\c{t}, x_t) = \min_{c \in \c{t}} \twonorm{x_t -
c}^2$. The total loss of an algorithm up to time step $t$ is denoted by $L_{t}
= \sum_{\tau=1}^t \loss(\c{\tau}, x_\tau)$. The loss of the best \kmeans{} solution in
\emph{hindsight} after $T$ steps is denoted by $\opt$. The regret is
defined as $L_T - \opt$. Several of the algorithms we consider will pick cluster \centre{}s from a
constrained set; we sometimes refer to any set of $k$ sites from such a
constrained set as an \emph{expert}. We define the $(1+\epsilon)$-approximate
regret as $L_T - (1 + \epsilon) \opt$.

The loss of the weighted \kmeans{} problem is defined similarly, 
given a weight function $\omega:X\to\reals_+$, as 
$\loss(\boldsymbol{\mu}, X) = \sum_{x\in X} \omega(x)\min_{\mu \in \boldsymbol{\mu}} \twonorm{x - \mu}^2$. 

We denote the best \kmeans{} solution, i.e. best $k$ cluster \centre{}s, for $X_{1:t-1}$ by $\cstar{t}$, hence
$\cstar{T+1}$ is the best \kmeans{} solution in hindsight. In our setting, the
\emph{Follow-The-Leader} (\FTL{}) algorithm simply picks $\cstar{t}$ at time $t$. 
We use
$\tilde{O}$ to suppress factors that are poly-logarithmic in $T$ and polynomial
in $k$ and $d$.

\vspace{-2mm}
\subsection{\MWUA{} with Grid Discretization}
\label{sec:mwua}
While the multiplicative weight update algorithm (\MWUA{}) is very widely
applicable, there are a couple of difficulties when it comes to applying it to
our problem. In order to obtain a finite set of experts, we consider
\emph{sites} obtained by a $\delta$-grid of $[0, 1]^d$. In order to obtain
regret bounds that are $\tilde{O}(T^{1-\alpha})$ for $0 < \alpha \leq 1/2$ (typically
$\alpha = 1/2$), we need to choose $\delta = T^{-\alpha/2}$; this means that the number of experts is exponentially large in $k$ and $d$ and polynomially large in $T$. However, since the regret of \MWUA{} only has logarithmic dependence on the number of experts, this mainly incurs a computational, rather than statistical cost.%
\footnote{There appears to be some statistical cost in that we are only able to prove bounds on expected regret.}
In Section~\ref{sec:approx-regret}, we develop a more \emph{data-adaptive}
version of the \MWUA{}. This allows us to significantly reduce the number of
\emph{sites} required---we don't put sites in location where there is no
data---but also requires a much more intricate analysis. The price we pay for
adaptivity and computational efficiency is that we are only able to get regret
bounds for $(1+\epsilon)$-regret. However, the results in
Section~\ref{sec:off2on} show that this is (under computational conjectures)
unavoidable.

\begin{theorem}
\label{thrm:eps-grid-MWUA}
Let $S = \{ i \delta ~|~ 0 \leq i \leq \delta^{-1} \}^d \subset \reals^d$ be
the set of \emph{sites} and let $\mathcal{E} = \{ C \subset S ~|~ |C| = k \}$
be the set of experts ($k$-\centre{}s chosen out of the sites). Then, for any $0 <
\alpha \leq 1/2$, with $\delta = T^{-\alpha/2}$, the \MWUA{}  with the expert set
$\mathcal{E}$ achieves regret $\tilde{O}(T^{1 - \alpha})$; the per round
running time is $O(T^{\alpha k d/2})$. 
\end{theorem}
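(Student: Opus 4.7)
The plan is to apply the standard multiplicative-weights (Hedge) analysis to the finite expert class $\mathcal{E}$ and then separately control the discretization gap between the best grid-based expert and the optimal continuous solution $\opt$. First I count experts and bound the per-round losses: since $|S|=(1+\lfloor 1/\delta\rfloor)^d=O(\delta^{-d})$, we have $|\mathcal{E}|\le |S|^k=O(\delta^{-dk})$, so $\log|\mathcal{E}|=O(\alpha\, dk\log T)$ after plugging in $\delta=T^{-\alpha/2}$; every per-round loss $\ell_t(C,x_t)=\min_{c\in C}\twonorm{x_t-c}^2$ lies in $[0,d]$ because all points and sites sit in $[0,1]^d$. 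Running Hedge on $\mathcal{E}$ after rescaling losses into $[0,1]$, with learning rate $\eta=\Theta(\sqrt{\log|\mathcal{E}|/T})$, the textbook bound yields expected regret relative to the best expert in $\mathcal{E}$ of $O(d\sqrt{T\log|\mathcal{E}|})=\tilde O(\sqrt T)$.

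Next I bound the discretization gap to $\opt$. Let $C^*=\{c^*_1,\ldots,c^*_k\}$ be an optimal continuous solution and, for each $j$, let $\hat c^*_j\in S$ be a nearest site, so $\twonorm{c^*_j-\hat c^*_j}\le \delta\sqrt d/2$; write $i(t)$ for the index of the cluster to which $x_t$ is assigned by $C^*$. Expanding
\[
\|x_t-\hat c^*_{i(t)}\|^2=\|x_t-c^*_{i(t)}\|^2+2(x_t-c^*_{i(t)})^\top(c^*_{i(t)}-\hat c^*_{i(t)})+\|c^*_{i(t)}-\hat c^*_{i(t)}\|^2
\]
and summing, a deterministic fixed-grid analysis combined with Cauchy--Schwarz gives only $O(\delta\sqrt{dT\opt})=O(dT^{1-\alpha/2})$ after using the trivial bound $\opt\le Td$, because the linear cross term is uncontrolled. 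To reach the sharper $O(dT^{1-\alpha})$ bound the theorem needs, I would randomize the grid by a uniform shift $u\in[-\delta/2,\delta/2]^d$: by translation symmetry within each cell the coordinate-wise offset $c^*_j-\hat c^*_j(u)$ is uniform on $[-\delta/2,\delta/2]^d$, so $\mathbb{E}_u[c^*_j-\hat c^*_j(u)]=0$ and $\mathbb{E}_u[\|c^*_j-\hat c^*_j(u)\|^2]=d\delta^2/12$. Taking expectation over $u$ inside the expansion annihilates the cross term and leaves
\[
\mathbb{E}_u\!\left[\sum_{t=1}^T \ell_t(\hat C^*(u),x_t)\right]\le \opt+\tfrac{Td\delta^2}{12}=\opt+O(dT^{1-\alpha}).
\]

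Combining the two pieces, the expected cumulative loss is bounded by $\opt+\tilde O(\sqrt T)+O(dT^{1-\alpha})$; for $\alpha\in(0,1/2]$ the second term dominates, recovering the stated $\tilde O(T^{1-\alpha})$ expected regret. The per-round runtime is dictated by maintaining and sampling from a distribution over $|\mathcal{E}|=O(\delta^{-dk})=O(T^{\alpha dk/2})$ experts, matching the claimed $O(T^{\alpha kd/2})$ bound.

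The main obstacle, and essentially the only non-boilerplate step, is the discretization argument: a naive deterministic grid loses a $T^{\alpha/2}$ factor through the cross term $(x_t-c^*)^\top(c^*-\hat c^*)$, and the random grid shift (or some equivalent first-moment cancellation) seems essential to achieve the sharp rate. This is also the reason the guarantee is only for \emph{expected} regret, as flagged by the footnote in the paper.
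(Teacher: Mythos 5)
Your Hedge/\MWUA{} analysis and the runtime count match the paper's. The problem is in the discretization step. You claim that with a fixed deterministic grid the cross term $2(x_t-c^*_{i(t)})^\top(c^*_{i(t)}-\hat c^*_{i(t)})$ is ``uncontrolled'' and costs a $T^{\alpha/2}$ factor, and you repair this by randomly shifting the grid. But the cross term vanishes \emph{exactly} for the deterministic grid: in an optimal \kmeans{} solution each $c^*_j$ is the centroid of the points assigned to it (first-order optimality), so summing the cross term over the points of cluster $j$ gives $2\bigl(\sum_{t:\,i(t)=j}(x_t-c^*_j)\bigr)^\top(c^*_j-\hat c^*_j)=0$. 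This is precisely the folklore identity the paper invokes, $\loss(\hat\mu,P)=|P|\cdot\twonorm{\mu-\hat\mu}^2+\loss(\mu,P)$ for $\mu$ the centroid of $P$, which immediately yields $\loss(\hat C^*,X_{1:T})\le\opt+T\cdot\frac{d\delta^2}{4}=\opt+O(dT^{1-\alpha})$ with no randomization. The obstacle you identify as ``essentially the only non-boilerplate step'' is therefore illusory.

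This matters for two reasons. First, the theorem is stated for the fixed grid $S=\{i\delta\mid 0\le i\le\delta^{-1}\}^d$; your argument, as written, only bounds the comparator loss for a randomly shifted grid, i.e.\ for a different algorithm, so it does not prove the stated claim (though the shift idea is internally consistent and would give the same rate for that modified algorithm). Second, your closing remark misattributes the ``expected regret'' caveat: the expectation in the theorem comes from \MWUA{} sampling an expert each round, not from any grid randomization. With the centroid identity in hand, your proof collapses to the paper's: Hedge regret $O(d\sqrt{kd\log(\delta^{-1})T})$ against the best grid expert, plus discretization error $\frac{d\delta^2}{4}T$, optimized at $\delta=T^{-\alpha/2}$.
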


\vspace{-5mm}
\subsection{Lower Bound}
\label{sec:off2on}
Given the disappointing runtime of the grid-\MWUA{} algorithm, one may wonder whether there is a way to avoid explicitly storing a weight for each of the exponentially many experts and speed-up the \MWUA{} algorithm. The following result gives evidence that it is unlikely that a significant speed-up is possible under complexity-theoretic assumptions. In particular, a consequence of Theorem~\ref{thrm:offline_reduction} is that for instances of \kmeans{} with data lying in a bounded region and $\opt \geq 1/\mathrm{poly}(n)$, a per-round polynomial time online algorithm would imply a fully polynomial-time approximation scheme. Recall that \kmeans{} is $\APX$-hard and current best known algorithms are exponential in at least one of the two parameters $k$ and $d$. 

\begin{theorem}
\label{thrm:offline_reduction}
Suppose there is an online \kmeans{} clustering algorithm $\mathcal{A}$ that achieves regret $\tilde{O}(T^{1 - \alpha})$ and runs at time $t$ in time $f(t, k, d)$. Then, for any $\epsilon > 0$, there is a randomized offline algorithm that given an instance of \kmeans{} outputs a solution with cost at most $\opt + \epsilon$ with constant probability and runs in time polynomial in $n, k, d, \frac{1}{\epsilon}, f(n, k, d)$.
\end{theorem}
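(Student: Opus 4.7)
The plan is a standard online-to-batch conversion where the online learner is fed a long sequence of i.i.d.\ samples drawn from the empirical distribution of the offline instance. Let $X=\{y_1,\dots,y_n\}$ be the input to the offline problem and let $\opt$ denote its optimal \kmeans{} cost. For a parameter $T$ chosen below, run $\mathcal{A}$ on a sequence $x_1,\dots,x_T$ where each $x_t$ is drawn independently and uniformly from $X$. Let $C_1,\dots,C_T$ be the candidate sets produced by $\mathcal{A}$. The final offline algorithm samples a uniformly random index $J\in\{1,\dots,T\}$, evaluates $\cost(X,C_t)$ for every $t$ in polynomial time, and outputs the $C_t$ that minimises this quantity; alternatively outputting $C_J$ directly already suffices for a constant-probability guarantee.

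The key identity is that, because $C_t$ is measurable with respect to $x_1,\dots,x_{t-1}$ and the algorithm's internal randomness, $x_t$ is independent of $C_t$ and uniform on $X$, so
\[
\mathbb{E}\bigl[\loss(C_t,x_t)\bigr]\;=\;\mathbb{E}\Bigl[\tfrac{1}{n}\cost(X,C_t)\Bigr].
\]
Summing and using $J\sim\mathrm{Unif}[T]$ yields $\tfrac{n}{T}\mathbb{E}[L_T]=\mathbb{E}[\cost(X,C_J)]$. On the other hand, since the hypothesised regret bound holds on every sequence,
\[
\mathbb{E}[L_T]\;\le\;\mathbb{E}\Bigl[\min_{C:|C|=k}\sum_{t=1}^{T}\loss(C,x_t)\Bigr]+\tilde{O}(T^{1-\alpha})\;\le\;\min_{C}\mathbb{E}\Bigl[\sum_{t=1}^{T}\loss(C,x_t)\Bigr]+\tilde{O}(T^{1-\alpha})\;=\;\tfrac{T}{n}\opt+\tilde{O}(T^{1-\alpha}),
\]
where the middle inequality is Jensen applied to the minimum and the last equality uses $\mathbb{E}[\loss(C,x_t)]=\cost(X,C)/n$. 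Combining, $\mathbb{E}[\cost(X,C_J)]\le\opt+n\cdot\tilde{O}(T^{-\alpha})$.

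Choosing $T$ to be a sufficiently large polynomial in $n,k,d,1/\epsilon$ makes the additive term at most $\epsilon/2$, and Markov's inequality applied to the nonnegative random variable $\cost(X,C_J)-\opt$ gives $\Pr[\cost(X,C_J)\le\opt+\epsilon]\ge 1/2$, which is the desired constant-probability guarantee (and is only improved by outputting the best of the $C_t$'s). The total running time is $\sum_{t=1}^{T}f(t,k,d)\le T\cdot f(T,k,d)$; since $T$ is polynomial in $n,k,d,1/\epsilon$ and $f$ is assumed to be the per-round running time of $\mathcal{A}$ (and monotone), this is polynomial in $n,k,d,1/\epsilon,f(n,k,d)$ as required.

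The routine calculation I would not grind through is the exact choice of $T$ as a function of the hidden $\mathrm{poly}(k,d,\log T)$ factor inside $\tilde{O}(\cdot)$; it is a straightforward iterated-logarithm inversion. The only genuinely delicate point is the Jensen-style step $\mathbb{E}[\min_C \sum_t \loss(C,x_t)]\le\min_C \mathbb{E}[\sum_t\loss(C,x_t)]$, which goes in the correct direction here (the offline optimum of a random sample can only be smaller than its expectation), and the observation that $x_t$ is independent of $C_t$; both are what make the snapshot $C_J$ behave like a good offline solution to $X$.
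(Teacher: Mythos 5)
Your proposal is correct and follows essentially the same route as the paper: stream i.i.d.\ uniform samples from the offline instance into $\mathcal{A}$, use $\mathbb{E}[\loss(C_t,x_t)]=\cost(X,C_t)/n$ together with the regret bound (and the observation that the stream's hindsight optimum is at most the offline optimum's cost on the stream), and output a good snapshot $C_t$. The only cosmetic difference is that you extract the good round via Markov's inequality on a random index while the paper argues via $\min_t \epsilon_t$; both yield the same constant-probability guarantee.
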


\subsection{Follow The Leader-- Linear Regret Worst Case}
\label{sec:FTL_counter}
The lower bound presented here does not imply anything on the regret guarantees of \emph{Follow-The-Leader (\FTL{})} that has orcale access to the best offline solution $\cstar{t}$ at each step. This section will show that \FTL{} incurs linear regret in the worst case, 
\begin{theorem}
	\label{thrm:ftl}
\FTL{} obtains $\Omega(T)$ regret in the worst case, for any fixed $k\ge 2$ and any dimension.
\end{theorem}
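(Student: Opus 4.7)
The plan is to construct, for every $T$, an adversarial input sequence on which \FTL{} suffers $L_T - \opt = \Omega(T)$. It is enough to give the construction for $k = 2$ and $d = 1$ inside $[0,1]$: to extend to arbitrary $k \ge 2$ we append $k-2$ singleton points at widely separated auxiliary locations (each such location gets only a single point overall, so it is covered at zero loss by its own cluster centre under both \FTL{} and the offline optimum), and to extend to arbitrary dimension we pad the remaining coordinates with $0$.

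For the base case I would use three locations $\ell < m < r$ in $[0,1]$ (concretely $\ell = 0$, $m = 1/2$, $r = 1$). The sequence consists of a brief deterministic initialization phase that drives FTL's two centres to known positions, followed by an adaptive phase. In the adaptive phase, at every round $t$ the adversary inspects $\cstar{t}$ -- which by definition of \FTL{} is the optimal $2$-means solution on $X_{1:t-1}$ -- and places $x_t$ at whichever location in $\{\ell, m, r\}$ is farthest from the closer centre in $\cstar{t}$, subject to a cap on how many times $m$ may be chosen. The relevant structural fact is that in one dimension the optimal $2$-means partition is always a threshold clustering: the two components of $\cstar{t}$ are weighted means of a prefix and a suffix of the sorted multiset $X_{1:t-1}$. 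The adversary uses this rigidity to keep the FTL threshold oscillating between the gaps $(\ell, m)$ and $(m, r)$, so that one of the three support points is always bounded away from both centres.

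Two quantitative estimates then finish the proof. First, a uniform lower bound $\loss(\cstar{t}, x_t) \ge c_1$ for every round $t$ in the adaptive phase, for a universal constant $c_1 > 0$ depending only on the gap sizes $m - \ell$ and $r - m$; summing gives $L_T \ge c_1 T - O(1)$. Second, an upper bound on $\opt$ via a \emph{fixed} pair of candidate centres $C^\circ$ close to $\ell$ and $r$: one argues that $\sum_{t=1}^{T} \loss(C^\circ, x_t) \le c_2 T + O(1)$ for some $c_2 < c_1$, whence $\opt \le c_2 T + O(1)$ and $L_T - \opt \ge (c_1 - c_2) T - O(1) = \Omega(T)$.

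The main obstacle is the second estimate: because the adversary is adaptive, the multiplicities $n_\ell, n_m, n_r$ themselves depend on FTL's trajectory and cannot be prescribed in advance. I would handle this by an invariant/potential argument showing that a placement at $m$ -- the only location at which $C^\circ$ incurs non-negligible loss -- can be forced by the adversary's rule only when $\cstar{t}$ sits in a specific configuration, and that configuration cannot persist for more than a $c_2/c_1$ fraction of rounds because each $m$-placement shifts the FTL threshold enough to exit it. Verifying this invariant robustly against any tie-breaking rule FTL may use is the technical heart of the proof; the rest is routine $2$-means bookkeeping.
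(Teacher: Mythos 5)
Your three-location construction is genuinely different from the paper's, and its skeleton is sound, but the step you yourself identify as the technical heart is where the proposal breaks. The cap on $m$-placements is self-defeating: once the cap is hit the adversary may be forced to play $\ell$ or $r$, and if \FTL{}'s centres then sit near $\ell$ and $r$ the per-round bound $\loss(\cstar{t},x_t)\ge c_1$ evaporates, so $L_T\ge c_1T-O(1)$ no longer follows. Worse, the proposed invariant is false: when the history is dominated by points at $\ell$ and $r$, a single placement at $m$ moves the optimal centroids by only $O(1/t)$, so the configuration in which $m$ is the farthest location can persist for $\Omega(t)$ consecutive rounds and $n_m$ can grow linearly; no ``threshold shift'' argument will exit it.

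The good news is that neither the cap nor the invariant is needed. Keep the unconstrained adversary, so the pigeonhole bound gives $\loss(\cstar{t},x_t)\ge 1/16$ every round (any two centres leave one of $\{0,1/2,1\}$ at distance $\ge 1/4$) and $L_T\ge T/16$. For the upper bound, do not compare against $C^\circ=\{\ell,r\}$, which costs $n_m/4$ and is exactly why you felt you had to control $n_m$; compare instead against the interval clustering that merges $m$ with the \emph{lighter} extreme. Writing $s=\min(n_\ell,n_r)$, its cost is $\frac{n_m s}{4(n_m+s)}$, and maximizing over $n_m+2s\le T$ gives $\opt\le\frac{3-2\sqrt{2}}{4}T\approx 0.043\,T<T/16$ for \emph{every} outcome of the adaptive play, hence regret at least $\frac{8\sqrt{2}-11}{16}T=\Omega(T)$, independent of \FTL{}'s tie-breaking. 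For contrast, the paper's proof uses a different geometry: two locations $\delta$-close together plus one far away, so that the optimal $2$-clustering flips discontinuously every $\Theta(\delta^{-2})$ steps; \FTL{} is ambushed with a far point right after each flip and pays $\Omega(1)$ excess loss at each of the $\Theta(\delta^{2}T)$ flips. Your (repaired) argument yields a uniform per-round gap rather than $\Theta(T)$ discrete mistakes, and in fact applies to any deterministic online algorithm, not just \FTL{}. One loose end you share with the paper: for $k>2$ each auxiliary location should receive $\Omega(T)$ points, since a single point there forces neither \FTL{} nor the hindsight optimum to dedicate a centre to it.
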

\begin{figure}
\centering
\subfloat[\label{fig:ftl_counter}\FTL{} Linear Regret, \MWUAFTL{} Sublinear Regret]{
\includegraphics[width=0.48\textwidth]{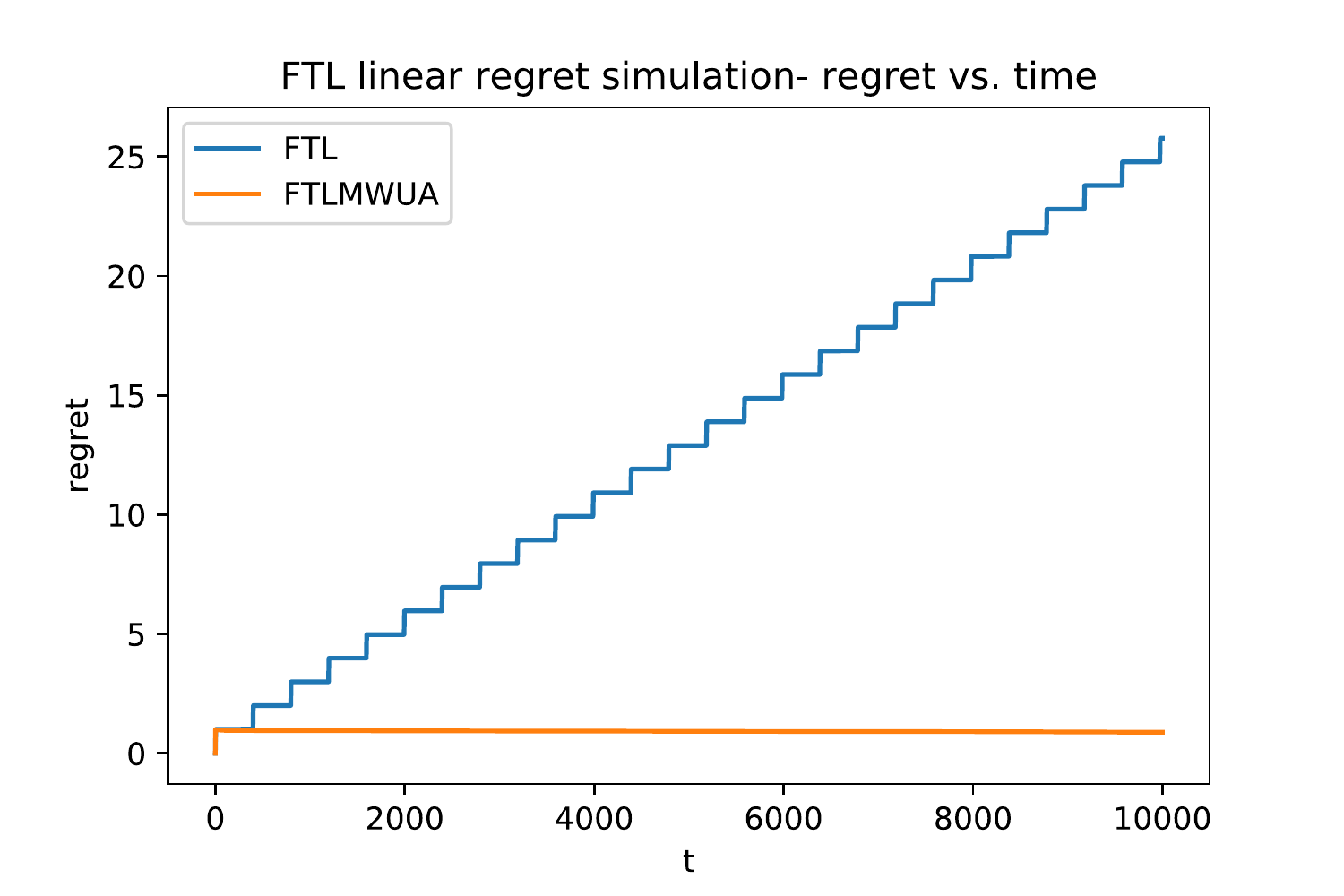}
}
\hfill
\subfloat[\label{fig:ftl_vs_sim_mnist} \FTL{} Regret--\MNIST{} and Gaussian Mixure Models]{
\includegraphics[width=0.48\textwidth]{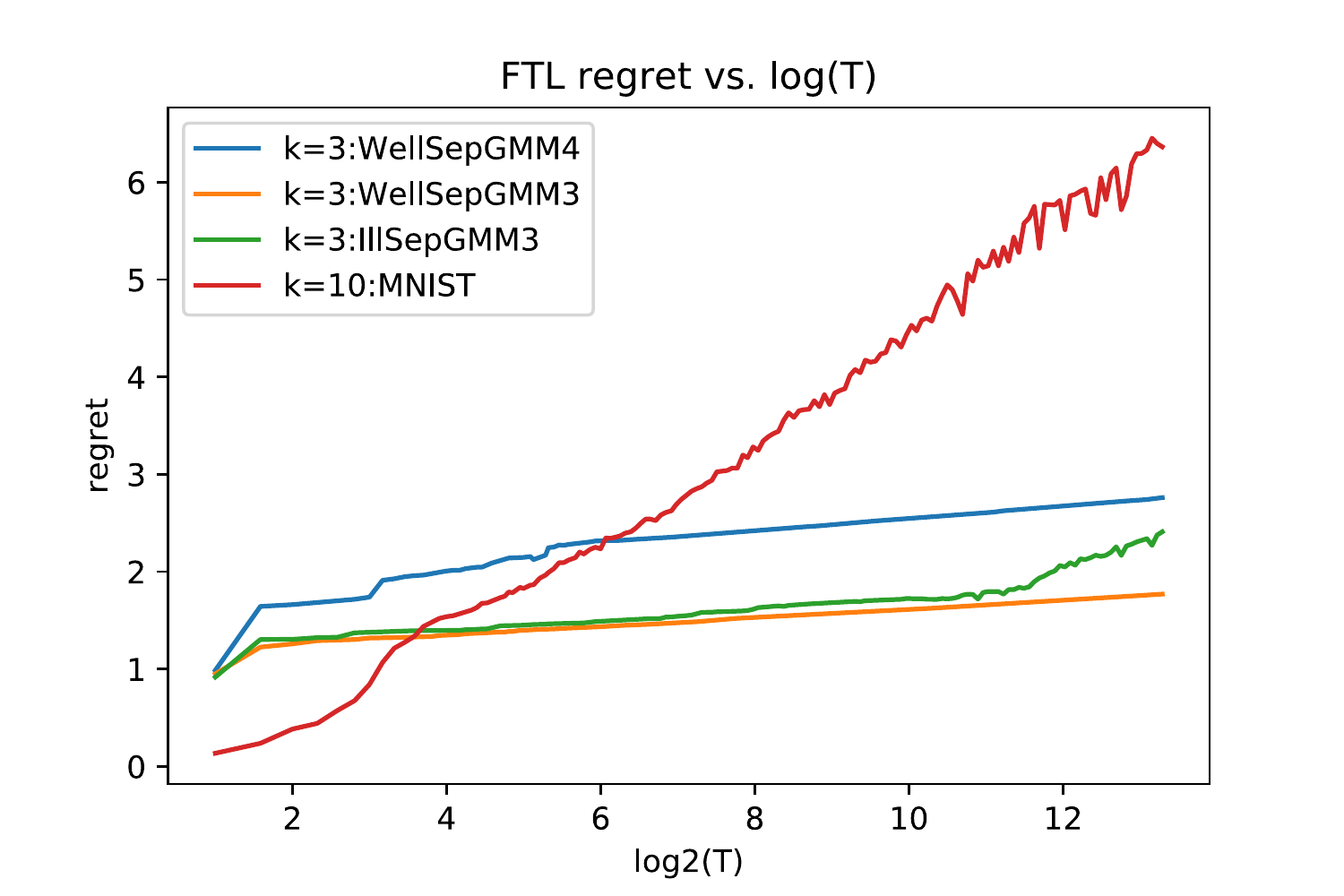}
}
\end{figure}

Figure \reffig{fig:ftl_counter} shows the regret of \FTL{} at any point in time if the stream halted at that point, which we refer to as the \emph{regret halted at $t$}, for a stream that was generated according to the scheme describe above with $\delta=0.1$, along with a \MWUA{} over the set of leaders $\{\cstar{t}\}_{t=1}^T$, where the \MWUA{} weight for any leader is calculated according to their historical loss, regardless of the time t when it was introduced to the expert set, i.e. $\forall t'>t:w^{(t')}_{\cstar{t}}=\prod_{\tau=1}^{t'-1}(1-\eta \loss(\cstar{t},x_{\tau}))$. The staircase-like line for \FTL{} is caused by the fact that the specific data used makes \FTL{} suffer a constant excess loss (w.r.t. the optimal solution) every several iterations, and negligent excess loss in the rest of the iterations. This demonstrates that the counter example provided in \refsec{sec:appendix-basic} is viable and numerically stable without special care. The \MWUAFTL{} presents low asymptotic regret in this case, which is clearly sub-linear and possibly logarithmic in $T$. The best intermediary $k$-means solution at each step was calculated analytically.

\subsection{Follow The Leader-- Sublinear Regret on Natural Datasets}
\label{sec:FTL_sub}
The previous section showed that there are worst case instances that make \FTL{} perform badly. This section will present experimental results, on synthetic and real data sets, that suggest that \FTL{} performs very well on natural data sets.

Figure \reffig{fig:ftl_vs_sim_mnist} shows the regret halted at $t$ of \FTL{} vs. $\log(t)$ for four different data sets, all of size 10000. The first is a random sample from \textsf{MNIST}, labelled $\mathrm{MNIST}$, treated as a $d=784$ dimensional vector, normalized to a unit diameter box, using $k=10$. The others three are Gaussian Mixture Models ($\mathrm{GMM}$) with 3 Gaussians ($\mathrm{GMM3}$) or 4 Gaussians ($\mathrm{GMM4}$) in two dimensions. The $\mathrm{GMM4}$ case was run with $k=3$, where the 4 Gaussians are well separated (means distance is larger than 3 times the standard deviation) hence labelled $\mathrm{WellSepGMM4}$. The two $\mathrm{GMM3}$ data sets are labelled $\mathrm{WellSepGMM3}$ for the well separated case, and $\mathrm{IllSepGMM3}$ for a case where the Gaussians are ill separated (means distance is 0.7 fraction of the standard deviation). In all cases the best intermediary \kmeans{} solution was calculated using \kmeanspp{} with 300 iterations of local search, hence it is an approximation. The figure demonstrates a linear dependency between the regret and $\log(t)$ in these cases. All the standard deviations are set to $0.1$.

\section{Approximate Regret Minimization}
\label{sec:approx-regret}
\newcommand{\epsc}{\varepsilon_{\mathsf{c}}}
\newcommand{\epshrd}{\varepsilon_{\mathsf{hrd}}}
\newcommand{\minEdge}{\frac{\epshrd}{2t^3\sqrt{d}}}
\newcommand{\minEdgeT}{\frac{\epshrd}{2T^3\sqrt{d}}}
\newcommand{\minDT}{\frac{\epshrd}{2T^3}}
\newcommand{\minDt}{\frac{\epshrd}{2t^3}}
\newcommand{\diameter}{\Delta}
\newcommand{\mindiameter}{\delta}

\newcommand{\refinementCrtr}[1]{q(#1)}

\newcommand{\maxBranch}{(9\sqrt{d}/\epshrd)^{d} \log (T^3)}
\newcommand{\lme}{\Lambda}
\newcommand{\lmb}{\beta}
\newcommand{\calQ}{\mathcal{Q}}

\newcommand{\coreset}[1]{\mathcal{Q}_{#1}}
\newcommand{\epsstar}{\eps^*}
\newcommand{\tR}{\tilde{R}}

\newcommand{\tree}{\mathcal{T}}
\newcommand{\normloss}{\loss_1}
\newcommand{\unifweight}[2]{u_{#1}^{(#2)}}
\newcommand{\paths}{\mathcal{P}}
\newcommand{\mass}{\calM}
\renewcommand{\th}{^{\mathrm{th}}}

We present an algorithm that aims to minimize approximate regret for the online clustering problem. The algorithm uses three main components
\begin{enumerate}
\item The \emph{Incremental $\epsc-$Coreset} algorithm of \refsec{sub:coreset} that maintains a monotone sequence of sets $\{\coreset{t}\}_{t=0}^T$ such that $\coreset{t}$ contains a weighted $\epsc-$coreset for $X_{1:t}$. 
\item A \emph{Hierarchical Region Decomposition} of \refsec{sub:HRD} corresponding to $\{\coreset{t}\}_{t=0}^T$ and provides a tree structure $\calH_t$ related to $\epshrd$-approximations of \kmeans{}.
\item \MWUA{} for tree structured expert sets, \refsec{sub:mtmw}, referred to as \emph{Mass Tree \MWUA{}} (\MTMW{}), that is given $\calH_t$ at every step, and outputs a choice of $k$ centers.
\end{enumerate}

\begin{figure}
\begin{minipage}[t]{0.50\linewidth}
\vspace{0pt}
 \begin{algorithm}[H]
 \caption{\label{alg:main_alg}$\eps$-Regret Minimization}
 \SetAlgoLined
 \KwIn{$\eps, x_1,\ldots,x_T$}
 $t\gets 1\;
 \epsc, \epshrd \gets \epsstar(\eps,k,d,T)$\;
 Initialize $\calH_0$ and \MTMW{}{}\;
 \For{$t=1$ \KwTo $T$}{
   Obtain $\c{t}$ from \MTMW{}{}\;
   Receive $x_t$ and incur loss $\loss(\c{t},x_t)$\;
   Update $\coreset{t}$ to represent $x_t$\;
   Update $\calH_t$ to represent $\coreset{t}$\;
   Provide \MTMW{}{} with $\calH_t$ and $x_t$\;
 }
   \vspace{34pt}
 \end{algorithm}
\end{minipage}
\begin{minipage}[t]{0.56\linewidth}
\vspace{0pt}
 \begin{algorithm}[H]
 \caption{\label{alg:hrd_update}\HRD{} update step}
 \SetAlgoLined
 \KwIn{$t, \calR_{t-1}, x_t, \epshrd$}
 Let $\refinementCrtr{\cdot}$ be the refinement criteria for $x_t$ at $t$\;
 $\calR_t \gets \emptyset, \quad \calU_t \gets \calR_{t-1}$\;
 \While {$\calU_t \neq \emptyset$}{
  Pick and remove a region $R$ from $\calU_t$\;
  \eIf{$\refinementCrtr{R}$}{
     $\calR_t \gets \calR_t \cup R$
  } {
     Halve $R$ in all dimension, resulting with $H$\;
     $\calU_t \gets \calU_t \cup H$\;
  }
  }
 \end{algorithm}
 
\end{minipage}
\end{figure}

We present our main theorem and provide proof later on.
\begin{theorem}
\label{thm:apx_reg_runtime}
Algorithm~\algref{alg:main_alg} has a regret of 
$$\eps\cdot\opt + O\left( k\sqrt{d^3T}\log\left(\frac{kT^3\sqrt{d}}{\eps^2}\right)\right)$$
and runtime of
$$T \cdot O(\sqrt{d} k^2\log(T)\eps^{-2})^{k(d+O(1))}$$
\end{theorem}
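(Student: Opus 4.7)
The plan is to decompose the regret into three pieces corresponding to the three components of the algorithm, choose $\epsc = \epshrd = \Theta(\eps)$, and then bound each piece separately, using earlier results that can be cited.

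Let $L_T$ denote the algorithm's cumulative loss, let $\cstar{T+1}$ be the best $k$-means solution in hindsight, and let $\hat C$ be the best $k$-center configuration whose centers lie at leaves of the final \HRD{} tree $\calH_T$. Writing $\mathrm{cost}_{\coreset{T}}(\cdot)$ for the weighted loss against the coreset $\coreset{T}$, I would use the decomposition
\[
L_T - (1+\eps)\opt = \underbrace{\bigl(L_T - \textstyle\sum_{t} \loss(\hat C, x_t)\bigr)}_{\text{(i) MTMW regret vs. best tree expert}} + \underbrace{\bigl(\textstyle\sum_t \loss(\hat C, x_t) - \mathrm{cost}_{\coreset{T}}(\hat C)\bigr)}_{\text{(ii) coreset error}} + \underbrace{\bigl(\mathrm{cost}_{\coreset{T}}(\hat C) - (1+\eps)\opt\bigr)}_{\text{(iii) \HRD{} approximation}}.
\]

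Term (ii) is $O(\epsc) \cdot \opt$ by the incremental $\epsc$-coreset guarantee from \refsec{sub:coreset}. Term (iii) combines the coreset guarantee (the coreset optimum is within a $(1+\epsc)$ factor of $\opt$) with the \HRD{} guarantee from \refsec{sub:HRD} that some $k$-set of leaves of $\calH_T$ achieves a $(1+\epshrd)$-approximation to the optimal $k$-means cost on $\coreset{T}$, giving $\mathrm{cost}_{\coreset{T}}(\hat C) \le (1+\epshrd)(1+\epsc)\opt$. Choosing $\epsc = \epshrd = \eps/4$ makes (ii)+(iii) at most $\eps \cdot \opt$.

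Term (i) is the \MTMW{} regret against the best fixed expert in the tree. The loss per round is bounded by $d$ (points lie in $[0,1]^d$), and the number of experts at time $T$ is at most $N_T = \binom{|\calH_T|}{k}$ with $|\calH_T| \le \maxBranch$, so $\log N_T = O(k d \log(\sqrt d/\epshrd) + k\log\log T)$. A standard \MWUA{} analysis then yields
\[
L_T - \textstyle\sum_t \loss(\hat C,x_t) \;=\; O\!\left(\sqrt{T\log N_T}\right) \;=\; O\!\left(k\sqrt{d^3 T}\,\log\!\left(\tfrac{kT^3\sqrt d}{\eps^2}\right)\right),
\]
which is the additive term in the theorem. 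The subtle point here, and what I expect to be the main obstacle, is that the expert set is dynamic: when \HRD{} refines a region, the corresponding \MTMW{} experts are split into finer ones, so one cannot apply the textbook \MWUA{} bound as a black box. The mass-tree representation of \refsec{sub:mtmw} is precisely designed so that the weight of a newly introduced expert is consistent with the cumulative loss it \emph{would} have incurred, making the potential function argument behind \MWUA{} go through essentially unchanged; verifying this step carefully (and tracking the polylogarithmic overhead from the tree's depth) is the technical core of the proof.

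For the runtime, each round performs (a) a coreset update in $\poly(\log T, k, d, 1/\eps)$ time, (b) an \HRD{} update whose cost is dominated by $|\calH_t| \le \maxBranch$ region operations, and (c) one \MTMW{} step, whose cost scales with the current number of experts $|\calH_t|^k$. Plugging in $\epshrd = \eps/4$ gives $|\calH_t|^k = O\!\bigl(\sqrt d\, k^2 \log(T)/\eps^2\bigr)^{k(d+O(1))}$, and summing over the $T$ rounds produces the stated $T \cdot O(\sqrt d\, k^2 \log(T) \eps^{-2})^{k(d+O(1))}$ bound.
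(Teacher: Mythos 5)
There is a genuine gap, and it sits exactly where you placed the parameters. You set $\epsc=\epshrd=\eps/4$ and argue that the coreset and \HRD{} errors contribute only $O(\epsc+\epshrd)\cdot\opt$. But your comparator $\hat C$ (a fixed $k$-set of \emph{final} leaves) is not an expert of \MTMW{}: the experts are root-to-leaf \emph{paths}, and the path ending at $\hat C$ predicts, at time $t$, the centroids $\tS_t$ of the (coarser) ancestral regions at level $t$, not $\hat C$ itself. So your term (i) is not the \MTMW{} regret; the quantity \MTMW{} actually competes against is $\sum_t\loss(\tS_t,x_t)$, and the missing piece is relating this time-varying sequence to $\opt$. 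That relation (Lemma~\ref{eq:coreset_ftl_apx_opt} in the paper) is proved by telescoping over the $O(k\lme)$ refinement events of the comparator path, and each event costs a fresh $(\epsc+\epshrd)$ multiplicative error plus an additive $d$, yielding a multiplicative error of $O((\epsc+\epshrd)k\lme)$ with $\lme=\Theta(\log(T\sqrt d/\epshrd))$, not $O(\epsc+\epshrd)$. With $\epshrd=\eps/4$ this is $\Theta(\eps k\log T)\cdot\opt$, which does not give $(1+\eps)$-regret. This is why the paper must take $\epsc,\epshrd=\epsstar=\Theta\bigl(\eps^2/(k^2\log(T^3\sqrt d))\bigr)$.

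The same issue propagates to your runtime claim: with $\epshrd=\eps/4$ the branching bound of Lemma~\ref{eq:bounded_region_count} would give a base of $O(\sqrt d/\eps)^{kd}$, and the stated base $O(\sqrt d\,k^2\log(T)\eps^{-2})^{k(d+O(1))}$ only appears once $\epshrd^{-1}=\Theta(k^2\log(T^3\sqrt d)/\eps^2)$ is substituted; as written, your final runtime expression does not follow from your parameter choice. Your instinct about the dynamic expert set being the technical core of the \MWUA{} step is right, and the mass-tree initialization does handle it (the paper bounds the regret by $d\sqrt{-T\ln\mass(p^*)}$ with $-\ln\mass(p^*)\le k^2\lme\lmb$ and $\lmb\le d\lme$), but the decomposition must route through the path's cumulative loss $\sum_t\loss(\tS_t,x_t)$ rather than through $\sum_t\loss(\hat C,x_t)$, and the parameters must be chosen to absorb the $k\lme$ amplification.
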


We now continue to describe the different components, and then combine them.
\subsection{\label{sub:coreset}Incremental Coreset}
The \emph{Incremental Coreset} algorithm presented in this subsection receives an unweighted stream of points $X_{1:T}$ one point in each time step and maintains a monotone sequence of sets $\{\coreset{t}\}_{t=0}^T$ such that $\coreset{t}$ contains a weighted $\epsc-$coreset for $X_{1:t}$, for some given parameter $\epsc>0$. Formally, we have the following lemma, whose proof is provided in \refsec{sec:appendix-approx}.

\begin{lemma}
\label{lem:coreset_lemma}
  For any time step $t$, the algorithm described in \ref{sec:coreset_alg} outputs a set of points $\coreset{t}$ such that it contains
  a $(1+\epsc)-$coreset for $X_{1:t}$, which we denote $\chi(X_{1:t})$,
  and has size at most $O(k^2\epsc^{-4} \log^4 T)$. Moreover, we have that $\coreset{t}\subseteq \coreset{t+1}$.
\end{lemma}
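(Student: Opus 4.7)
My plan is to build the incremental coreset by combining an online $O(1)$-approximation for $k$-means with an online sensitivity-style sampling scheme, arranged so that a point, once added to $\coreset{t}$, is never removed. This yields monotonicity by construction, and the coreset/size guarantees then reduce to analyzing a single sampling process.

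\emph{Step 1 (maintain an approximate solution).} I would first maintain, in a streaming fashion, an $O(1)$-approximate $k$-means solution $\tilde C_t$ for $X_{1:t}$ together with a cost estimate $\widetilde{\cost}_t = \Theta(\cost(\tilde C_t, X_{1:t}))$. This can be done by composing the incremental $k$-\centre{} algorithm of \citet{charikar2004incremental} with a standard $k$-\centre{} to $k$-means reduction, or by truncating the online facility location of \citet{meyerson2001online} to at most $k$ open \centre{}s. Both admit polylogarithmic per-point updates and yield a cost estimate that is monotone non-decreasing in $t$.

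\emph{Step 2 (online sampling rule).} For each arriving $x_t$ I would compute an online upper bound on its sensitivity,
\begin{equation*}
\tilde\sigma(x_t) \;:=\; \frac{\twonorm{x_t - \tilde C_t}^2}{\widetilde{\cost}_t} \;+\; \frac{1}{|A_t|},
\end{equation*}
where $A_t$ is the cluster of $\tilde C_t$ that $x_t$ is assigned to. I would then independently insert $x_t$ into $\coreset{t}$, with weight $1/p_t$, where $p_t := \min\{1,\; c\, k\, \epsc^{-2}\log^2 T\cdot \tilde\sigma(x_t)\}$ for a suitable absolute constant $c$, and otherwise set $\coreset{t} := \coreset{t-1}$. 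Monotonicity $\coreset{t} \subseteq \coreset{t+1}$ is then immediate.

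\emph{Step 3 (the coreset property, uniform in $t$).} By the standard sensitivity-sampling framework (pseudo-dimension $\tilde O(k)$ for the $k$-means range space on $[0,1]^d$), for any fixed $t$ the weighted sample inside $\coreset{t}$ is a $(1+\epsc)$-coreset for $X_{1:t}$ with high probability, taking this sample to be $\chi(X_{1:t})$. To promote this into a guarantee simultaneously over all $t \in [T]$, I would take a union bound over the $T$ prefixes, paid for by an extra $\log T$ factor in the sampling rate; crucially, monotonicity of $\widetilde{\cost}_t$ ensures that the $\tilde\sigma(x_\tau)$ computed at the time $x_\tau$ was processed remain valid upper bounds on its current sensitivity at every later $t$.

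\emph{Step 4 (size bound).} The expected size is $\sum_t p_t \leq c\, k\, \epsc^{-2}\log^2 T \cdot \sum_t \tilde\sigma(x_t)$. The summation of online sensitivities is $\tilde O(k \log^2 T)$: the $1/|A_t|$ terms telescope cluster-wise to $\tilde O(k)$, while the $\twonorm{x_t-\tilde C_t}^2/\widetilde{\cost}_t$ terms sum to $O(\log T)$ using monotonicity of $\widetilde{\cost}_t$ (a standard potential-function argument). Putting things together gives $\mathbb{E}[|\coreset{T}|] = \tilde O(k^2\epsc^{-2}\log^4 T)$, and a Chernoff/Bernstein concentration bound together with a union bound over $t$ turns this into an $O(k^2 \epsc^{-4}\log^4 T)$ high-probability bound, exactly as claimed.

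The main obstacle I anticipate is the time-uniform coreset guarantee: the \emph{same} monotonically growing sample must simultaneously witness a $(1+\epsc)$-coreset for every prefix $X_{1:t}$, even though $\tilde C_t$ and $\widetilde{\cost}_t$ evolve. Reducing this to a single-time sensitivity-sampling statement requires (i) the monotonicity of $\widetilde{\cost}_t$ so that earlier sensitivity bounds stay valid, and (ii) a careful union bound over the $T$ prefixes combined with the pseudo-dimension bound for $k$-means range spaces; this is precisely what inflates the standard $\tilde O(k\epsc^{-2})$ coreset size to the $O(k^2\epsc^{-4}\log^4 T)$ in the statement.
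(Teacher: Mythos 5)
Your plan is a genuinely different route from the paper's: the paper does not use sensitivity sampling at all. It runs the incremental algorithm of Charikar, O'Callaghan and Panigrahy to keep $O(\log T)$ candidate solutions, at least one of which is a bicriteria $\bigl(O(k\log^2 T),O(1)\bigr)$-approximation with \emph{frozen} point-to-centre assignments, and for each centre, each distance ring $[2^j,2^{j+1})$, and each of $\log T$ guesses $2^u$ of the ring's eventual population it maintains a uniform sample of rate $\zeta/2^u$ capped at $2\zeta$ elements; correctness is then a black-box application of Chen's coreset theorem to the sample corresponding to the correct guess, and the size and monotonicity claims are immediate from the bookkeeping. Online sensitivity sampling of the kind you describe can be made to work, but as written your argument has a gap at Step 1 on which everything downstream leans.

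Concretely: you need an \emph{incremental} $O(1)$-approximate $k$-means solution with $k$ centres, and neither of your proposed constructions supplies one. A $c$-approximate $k$-centre solution of radius $r$ only bounds the $k$-means cost by $nr^2$ while the $k$-means optimum can be as small as $\Theta(r^2)$, so the "$k$-centre to $k$-means reduction" loses a factor of order $n$, not a constant; and online facility location truncated to $k$ facilities forfeits its guarantee. This is exactly why the paper settles for $O(k\log^2 T)$ centres and a success probability that must itself be tracked (Theorem~\ref{thm:charikar} only holds with probability $1/2$). The fix is to compute $\tilde\sigma$ relative to such a bicriteria solution — sensitivity upper bounds tolerate extra centres — but note two further points your sketch elides: (i) the telescoping of $\sum_t 1/|A_t|$ requires that each point's cluster assignment is frozen once made (otherwise $|A_t|$ can be small arbitrarily often), i.e. you need the incremental-clustering property, not merely a monotone cost estimate; and (ii) your sampling probabilities are adaptive (they depend on the realized $\tilde C_\tau$), so the per-prefix coreset guarantee needs a martingale-style version of the sensitivity-sampling theorem before the union bound over the $T$ prefixes can be applied.
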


\subsubsection{\label{sec:char_kmeans_apx}Maintaining an $O(1)$-Approximation for \kmeans{}}
The first part of our algorithm is to maintain the sets of points $\setS{t}_1,\ldots \setS{t}_{s}$, for any time step $t$, representing possible sets of cluster centers 
where $s = O(\log T)$. We require that at any time,
at least one set, denoted $\apxsol$, induces a bicriteria
$(O(k\log^2 T), O(1))$-approximation to the \kmeans{} problem, namely,
$\apxsol$ contains at most
$O(k \log^2 T)$ centers and its loss is at most some constant
times the loss of the best \kmeans{} solution using at most $k$ centers.
Moreover, for each $i \in [s]$, the sequence $\{\setS{t}_i\}_{t=1}^T$ is an \emph{incremental clustering}: 
First, it must be a monotone sequence, i.e. for any time step $t$, $\setS{t-1}_i \subseteq \setS{t}_i$.
Furthermore, if a data point $x$ of the data stream is 
assigned to a center point $c \in \setS{t}_i$ at time $t$, it remains
assigned to $c$ in any $\setS{\tau}_i$ for $t\le\tau\le T$, i.e. until the end of the algorithm.

Each set which contains more than $O(k \log^2 T)$ is said to be
\emph{inactive} and the algorithm stops adding centers to it.
The remaining sets are said to be active.

To achieve this,
we will use the algorithm of Charikar, O'callaghan and Panigrahy \citep{charikar03},
which we call \charALG{}, and
whose performance guarantees are summarized by the following proposition, that follows immediately from \cite{charikar03}.
\begin{theorem}[Combination of
    Lemma 1 and Corollary 1 in~\citep{charikar03}]
  \label{thm:charikar}
  With probability at least 1/2, at any time $t$,
  one set maintained by \charALG{} is an $O(1)$-approximation
  to the \kmeans{} problem
  which uses at most $O(k \log T)$ centers.
\end{theorem}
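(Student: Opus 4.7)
The plan is to import the analysis of Charikar, O'Callaghan and Panigrahy essentially verbatim, adapting it from $k$-median to $k$-means by replacing distances with squared distances in the online facility-location subroutine underlying their algorithm. Concretely, \charALG{} maintains $s = O(\log T)$ parallel instances $\setS{t}_1,\ldots,\setS{t}_s$, one per geometric guess $\gamma_i = 2^i$ of the current optimum. When $x_t$ arrives, each active instance $i$ computes the squared distance $d_i^2 = \min_{c \in \setS{t-1}_i}\twonorm{x_t-c}^2$ and, with probability $\min(1, d_i^2 \cdot k \log T / \gamma_i)$, opens a new \centre{} at $x_t$; otherwise it assigns $x_t$ to its current nearest \centre{}. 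Instances that exceed the budget of $O(k\log^2 T)$ centers are declared inactive. This structure gives monotonicity $\setS{t-1}_i \subseteq \setS{t}_i$ and the irrevocable assignment property for free.

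The three facts one must extract from \cite{charikar03} are: (i) the algorithm only opens centers, so $\setS{\cdot}_i$ is nondecreasing and assignments are preserved; (ii) for any time $t$ with optimal cost $\opt_t$, the ``correct'' index $i^{*}(t)$ satisfying $\gamma_{i^{*}(t)} \in [\opt_t, 2\opt_t]$ has expected number of opened centers $O(k \log T)$ and expected cumulative assignment cost $O(\gamma_{i^*(t)}) = O(\opt_t)$; and (iii) by a Markov-style argument combined with the constant-factor slack in the center budget $O(k\log^2 T)$ vs. the expectation $O(k\log T)$, the event ``instance $i^{*}(t)$ is still active at time $t$ and has cost $O(\opt_t)$'' holds with probability at least $1/2$. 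These are exactly Lemma 1 and Corollary 1 of \cite{charikar03}, whose arguments carry over to squared distances because the online facility location analysis only uses the triangle-inequality-like property $\loss(c, x) \leq 2\loss(c, y) + 2\loss(y, x)$, which holds for squared Euclidean distances with a constant-factor loss.

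I expect the only genuine step beyond citation is verifying that the squared-distance variant of the internal potential argument goes through with the same $O(1)$ approximation factor (now larger by a constant absorbed in the $O(\cdot)$). The resulting set size $O(k\log T)$ and the approximation factor $O(1)$ then feed directly into the definition of the active sets and of $\apxsol$ used in \refsec{sub:coreset}. Since the statement asks only for ``some set'' at each $t$ rather than a single set uniformly over $t$, no additional union bound over time is needed beyond the one already performed inside Charikar et al.'s analysis, and the claimed probability $1/2$ follows.
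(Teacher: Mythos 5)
Your proposal is essentially the paper's own approach: the paper gives no proof of this theorem at all, importing it as a black-box citation of Lemma~1 and Corollary~1 of \cite{charikar03}, and your reconstruction (parallel instances indexed by geometric guesses of $\opt$, Meyerson-style randomized facility opening, deactivation upon exceeding the $O(k\log^2 T)$ budget, and a Markov argument yielding the constant success probability) matches what that citation is standing in for, while also making explicit the one adaptation the paper glosses over, namely that the $k$-median analysis survives the passage to squared distances via the relaxed triangle inequality $\loss(c,x)\le 2\loss(c,y)+2\loss(y,x)$. One small caution: the theorem is meant as a single probability-$1/2$ event under which the guarantee holds \emph{simultaneously} for all $t$ (this uniformity is what the downstream incremental-coreset argument of Lemma~\ref{lem:coreset_lemma} relies on), so your remark that no union bound over time is needed is justified only because, as you say, that union bound already lives inside the analysis of \cite{charikar03}, not because a per-$t$ statement would suffice.
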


\subsubsection{\label{sec:coreset_alg} Maintaining a Coreset for \kmeans{}}
Our algorithm maintains a coreset $\subcoreset_i$ based on each solution $\setS{t}_i$
maintained by \charALG{}. It makes use of coresets through the  
coreset construction introduced by \cite{chen2009coresets} whose properties
are summarized in the following theorem.  
\begin{theorem}[Thm. 5.5/3.6 in~\cite{chen2009coresets}]
  \label{thm:chen}
  Given a set $P$ of $T$ points in a metric space and parameters
  $1 > \epsc > 0$ and $\lambda > 0$, one can compute a weighted
  set $\chenCoreset$ such that $|\chenCoreset| = O(k \epsc^{-2} \log T
  (k \log T + \log(1/\lambda))$
  and $\chenCoreset$ is a $(1+\epsc)$-coreset of $P$ for
  \kmeans{} clustering, with probability  $1 - \lambda$.
\end{theorem}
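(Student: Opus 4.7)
The plan is to follow Chen's ring-sampling construction, which combines a constant-factor bicriteria skeleton, a geometric radial partition around the skeleton centers, uniform sampling within each partition piece, and a union bound over a carefully chosen net of candidate $k$-center solutions.

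First, I would obtain a constant-factor approximation $B=\{b_1,\ldots,b_k\}$ to the \kmeans{} problem on $P$ in polynomial time (any $O(1)$-approximation suffices, e.g.\ via local search or LP rounding). Write $P_i$ for the set of points of $P$ assigned to $b_i$ and $\Phi_i:=\sum_{p\in P_i}d(p,b_i)^2$. Next, for each $i\in[k]$, I partition $P_i$ into $O(\log T)$ \textbf{rings} $R_{i,j}=\{p\in P_i\,:\, d(p,b_i)^2\in[2^j\Phi_i/T,\,2^{j+1}\Phi_i/T)\}$ for $j=0,1,\ldots,O(\log T)$; points with squared distance below $\Phi_i/T$ contribute negligibly to any cost and can be lumped into a single tail bucket (or absorbed into the first ring). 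The total number of nonempty rings is at most $O(k\log T)$.

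Inside each ring $R_{i,j}$ I draw $s=\Theta\!\bigl(\epsc^{-2}(k\log T+\log(1/\lambda))\bigr)$ points uniformly at random with replacement, and give each sampled point the weight $|R_{i,j}|/s$. Call the resulting weighted multiset $\chenCoreset$; its total size is $O(k\log T)\cdot s = O(k\epsc^{-2}\log T(k\log T+\log(1/\lambda)))$, matching the claimed bound. The core correctness argument is to show that for any candidate center set $C$ with $|C|\le k$ one has $|\mathrm{cost}(\chenCoreset,C)-\mathrm{cost}(P,C)|\le \epsc\cdot\mathrm{cost}(P,C)$ with probability at least $1-\lambda$. Fix a ring $R_{i,j}$ and consider the estimator $\widehat{\Phi}_{i,j}(C):=\frac{|R_{i,j}|}{s}\sum_{\ell=1}^{s}d(p_\ell,C)^2$. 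For any two points $p,q\in R_{i,j}$, a triangle-inequality calculation shows $|d(p,C)^2-d(q,C)^2|=O\bigl(\sqrt{2^j\Phi_i/T}\cdot(d(p,C)+d(q,C))\bigr)$, so the per-sample deviation from the mean is controlled by the ring's contribution to the cost of $C$. Applying Hoeffding/Bernstein to the $s$ i.i.d.\ samples yields, for this fixed $C$, that $|\widehat{\Phi}_{i,j}(C)-\Phi_{i,j}(C)|\le \epsc\cdot(\Phi_{i,j}(C)+\tfrac{2^{j+1}\Phi_i}{T}|R_{i,j}|)$ with failure probability exponentially small in $s\epsc^2$. Summing over all $O(k\log T)$ rings gives an overall additive error of $\epsc\cdot(\mathrm{cost}(P,C)+O(\mathrm{cost}(P,B)))$; since $B$ is a constant-factor approximation, $\mathrm{cost}(P,B)=O(\mathrm{cost}(P,C^\star))\le O(\mathrm{cost}(P,C))$ when $C$ is near-optimal, and the additive term is absorbed into $O(\epsc)\cdot\mathrm{cost}(P,C)$ after rescaling $\epsc$ by a constant.

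The last step is the net/union bound that makes the guarantee hold simultaneously for all $C$. The standard trick is to observe that for any center $c$ in the ambient metric space, replacing $c$ by its nearest point $\pi(c)\in P$ changes $d(p,c)^2$ by at most a constant factor plus a term proportional to $d(c,\pi(c))^2$, so it is enough to verify the coreset property over the discrete family $\calC:=\{\text{$k$-subsets of }P\}$, together with a small perturbation argument. Since $|\calC|\le T^k$, $\log|\calC|= k\log T$, and a union bound over $\calC$ together with the choice $s=\Theta(\epsc^{-2}(k\log T+\log(1/\lambda)))$ yields overall failure probability at most $\lambda$. I expect the main obstacle to be the perturbation step bridging the discrete net and all continuous $k$-tuples cleanly enough to absorb every error into a single $\epsc$ factor; this requires carefully tracking the interplay between the bicriteria cost $\mathrm{cost}(P,B)$ and $\mathrm{cost}(P,C)$ and possibly a second (coarser) level of Chernoff for the tail/degenerate rings, which is exactly the technical crux of Chen's original argument.
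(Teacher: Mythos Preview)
The paper does not prove this statement at all: Theorem~\ref{thm:chen} is quoted directly from \cite{chen2009coresets} and used as a black box, with only a short \emph{review} of Chen's ring-sampling construction (bicriteria seed, geometric rings around each seed center, uniform sampling of $\Theta(\zeta)$ points per ring) given afterwards to motivate the incremental variant. Your proposal reconstructs exactly that construction and the accompanying Hoeffding-plus-net analysis, so it is fully in line with what the paper invokes; there is simply no in-paper proof to compare against beyond the construction sketch you already reproduce.
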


We now review the coreset construction of Chen.
Given a bicriteria $(\alpha, \beta)$-approximation $\setS{t}_0$ to the \kmeans{}
problem, Chen's algorithm works as follows.
For each center $c \in \setS{t}_0$, consider the points in $P$ whose closest center
in $\setS{t}_0$ is $c$ and proceed as follows. For each $i$, we define
the $i\th{}$ ring of $c$ to be the set of
points of cluster $c$ that are at distance $[2^i, 2^{i+1})$ to $c$.
The coreset construction simply 
samples
$\zeta \ge \beta \epsc^{-4} k \log T$ points among the points whose distance
to $c$ is in the range $[2^i, 2^{i+1})$ (if the number of such points
is below $\zeta$ simply take the whole set).
This ensures that the total number of points in the coreset
is $\alpha k \zeta \log \Delta$, where $\Delta$ is the maximum to minimum
distance ratio (which can be assume to be polynomial in $n$ without loss
of generality).

Our algorithm stores at each time $t$ a set of points $\coreset{t}$ of small
size that contains a $(1+\epsc)$-coreset. Moreover, we have that the
sets $\coreset{t}$ are incremental: $\coreset{t-1} \subseteq \coreset{t}$.

To do so, our algorithm uses the bicriteria approximation algorithm
\charALG{} of Section~\ref{sec:char_kmeans_apx} as follows. For each solution
stored by \charALG{}, the algorithm uses it to compute a coreset via
a refinement of Chen's construction.
Consider first applying Chen's construction to each solution $\setS{t}_i$
maintained by \charALG{}. Since \charALG{} is incremental, whatever
decisions we have made until time $t$, center open and point assignment,
will remain unchanged until the end. Thus, applying Chen's construction
seems possible. The only problem is that for a given set $\setS{t}_i$, a
given center $c \in \setS{t}_i$ and a given ring of $c$, we don't know in advance
how many points are going to end up in the ring and so, what should be
sampling rate so as to sample $\Theta(\zeta)$ elements uniformly.

To circumvent this issue, our algorithm proceeds as follows.
For each set $\setS{t}_i$, for each center $c \in \setS{t}_i$, for each $j$,
the algorithm maintains $\log T$ samples: one for each $2^i$ which
represents a ``guess''
on the number of points in the $j\th{}$ ring that will eventually arrive.
More precisely, for a given time $t$, let $p_t$ be the newly inserted
point. The algorithm then considers each solution $\setS{t}_i$, and
the center $c \in \setS{t}_i$ that is the closest to $p_t$. If $p_t$ belongs
to the $j\th{}$ ring, then $p_t$ is added to the set
$\sample(\setS{t}_i, c, j, u)$ with probability $\zeta/2^u$ for each
$u \in [\log T]$ if $|\sample(\setS{t}_i, c, j, u)| \le 2\zeta$.
Let $\sample(\setS{t}_i)$ denote the union over all center $c \in \setS{t}_i$,
integers $j,u \in  [\log T]$ of $\sample(\setS{t}_i,c,j,u)$.
Let $\coreset{t} = \bigcup_i \sample(\setS{t}_i)$. For more details, the reader is referred to the proof for Lemma~\eqref{lem:coreset_lemma} in \refsec{sec:appendix-approx}.

\subsection{\label{sub:HRD}Hierarchical Region Decomposition}
A \emph{region decomposition} is a partition $\calR = \{R_1,\ldots,R_r\}$
of $[0,1]^d$, each part $R_i$ is referred to as \emph{region}.
A \emph{hierarchical region decomposition} (\HRD) is a sequence of region
decompositions $\{\calR_1,\ldots,\calR_t\}$ such that $\calR_\tau$
is a refinement of $\calR_{\tau-1}$, for all $1 < \tau \le t$. In other
words, for all $1 < \tau \le t$, for all region $R \in \calR_\tau$ there exists
a region $R'\in  \calR_{\tau-1}$  such that $R \subseteq R'$.

As the hierarchical region decomposition $\calH = \{\calR_1,\ldots,\calR_t\}$
only partitions existing regions, it allows us to 
naturally define a tree structure $T_{\calH}$, rather than a DAG. There is a node in
$T_{\calH}$ for each region of each $\calR_\tau$. There is an edge
from the node representing region $R$ to the node representing
region $R'$ if $R \subseteq R'$ and there exists a $\tau$ such that
$R \in \calR_\tau$ and $R' \in \calR_{\tau+1}$. We slightly abuse
notation and refer to the node corresponding to region $R$ by $R$.
The bottom-level region decomposition is the region decomposition induced
by the leaves of the tree.
Moreover, given a hierarchical decomposition
$\calH_t = \{\calR_1,\ldots,\calR_t\}$ and a set of points $S$ of size $k$, we
define the \emph{representative regions of $S$ in $\calH_t$} as a sequence of multisets $\{\tR_{\tau}\}_{\tau=1}^t$ where $\tR_\tau = \{R\in \calR_\tau | \exists s\in S . s\in R\}$ with the correct multiplicity w.r.t. $S$. Note that these correspond to a path in $T_{\calH}$.
We define the \emph{Approximate Centers of $S$ induced by $\calH_t$} as the sequence of multisets $\{\tS_{\tau}\}_{\tau=1}^t$ the consists of the centroids of the representative regions of $S$ in $\calH_t$.

\subsection{Adaptive Grid Hierarchical Region Decomposition}
Given a sequence of points in $\reals^d$,
we describe an algorithm that maintains
a hierarchical region decomposition with $d-$dimensional hypercube regions as follows. 
Let $\epshrd>0$ be a parameter s.t. $\minEdgeT$ is a power of 2. 
We require this in order to define an implicit grid with side length $\minEdgeT$, i.e. diameter $\mindiameter_T = \minDT$, such that it can be constructed from a single region containing the entire space by repeated halving in all dimensions. Denote $\mindiameter_t=\minDt$.
We refer to this implicit grid, along with the region tree structure that corresponds to the this halving process as the \emph{Full Grid} and the \emph{Full Grid Tree}. 
Consider a step $t$, $R\in \calR_t$ and $x\in[0,1]^d$. Denote the diameter of $R$ as $\diameter{R}$, and $\displaystyle r=\min_{p\in R}\twonorm{p-x}$ the distance between $R$ and $x$. Notice that if $x\in R$ then $r=0$. 

We define the \emph{refinement criteria induced by $x$ at time $t$} as $\refinementCrtr{R}$, which takes the value true if and only if the diameter of $R$ is smaller or equal $\max(\epshrd\cdot r/2, \mindiameter_t)$.
At a given time $t$, a new point $x_t$ is received and the hierarchical region decomposition obtained at the end of time $t-1$, $\calH_{t-1}$, is refined using the following procedure, which guarantees that all the new regions satisfy the refinement criteria induced by all the points $X_{1:t}$ at the corresponding insertion times. The pseudocode is given in \algref{alg:hrd_update}.

We now turn to proving \emph{Structural Properties} of the
hierarchical region decomposition $\calH_t$ that the algorithm maintains.
The proof of the following lemma follows immediately from the definition.
\begin{lemma}
  \label{eq:shrinking_lemma}
  Consider the hierarchical region decomposition
  $\{\calR_1,\ldots,\calR_t\}$ produced by
  the algorithm at any time $t$.
  Consider a region $R \in \calR_{t-1}$ and
  let $\Delta R$ be the diameter of region $R$, then the following holds. 
  Either region $R$ belongs to $\calR_t$ or each child region
  of $R$ in $\calR_t$ has diameter at most $\frac{1}{2} \Delta R$.
\end{lemma}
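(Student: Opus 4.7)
The statement is a direct structural consequence of the update procedure in Algorithm~\algref{alg:hrd_update}, so the plan is to simply trace what the algorithm does to an arbitrary region $R \in \calR_{t-1}$ during the refinement step at time $t$ and argue by case analysis on whether $R$ is retained or halved. No new quantitative estimate is needed; the only care required is to align the informal ``child region'' language with the tree structure $T_{\calH}$ defined from consecutive decompositions $\calR_{t-1}, \calR_t$.

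Concretely, I would start by observing that the initialization $\calR_t \gets \emptyset$, $\calU_t \gets \calR_{t-1}$ guarantees that every region of $\calR_{t-1}$ is processed exactly once through the while-loop (possibly with further descendants being processed subsequently). Fix $R \in \calR_{t-1}$. When $R$ is popped from $\calU_t$, exactly one of two branches is taken. In the first branch, $\refinementCrtr{R}$ evaluates to true, so $R$ is placed into $\calR_t$; this is precisely the first alternative of the lemma. In the second branch, $R$ is replaced in $\calU_t$ by the set $H$ of regions obtained by halving $R$ along every coordinate. By construction of axis-aligned halving in $\reals^d$, each region in $H$ is a hypercube whose side length, hence whose diameter, is exactly $\tfrac{1}{2}\Delta R$.

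To finish, I would argue that any region $R' \in \calR_t$ with $R' \subsetneq R$ is a descendant (possibly after several further halvings within the same update step) of some element of $H$, and each halving can only decrease the diameter. Since $\calR_t$ is a partition of $[0,1]^d$ refining $\calR_{t-1}$, the children of $R$ in the tree $T_{\calH}$, i.e.\ the maximal elements of $\calR_t$ contained in $R$, are exactly these descendants of the elements of $H$. Hence each such child has diameter at most $\tfrac{1}{2}\Delta R$, yielding the second alternative.

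I do not expect any genuine obstacle here; the only subtlety worth stating explicitly is that within a single time step the while-loop may subdivide $R$ several times, so a ``child in $\calR_t$'' is not literally an element of $H$ but a descendant of one, which is why the bound is stated as ``at most'' rather than ``exactly'' $\tfrac{1}{2}\Delta R$. Mentioning this in one sentence suffices to make the argument rigorous.
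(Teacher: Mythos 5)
Your proposal is correct and matches the paper, which simply asserts that the lemma "follows immediately from the definition" of the update procedure; your case analysis on the two branches of the while-loop, plus the remark that further halvings within the same step only shrink the diameter further, is exactly the argument being taken for granted. No issues.
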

\begin{corollary}
\label{eq:lme}
Consider $\{R_t \in \calR_{t}\}_{t=1}^T$ such that $\forall t:~~R_{t+1}\subseteq R_{t}$, a sequence of nested regions of length $T$. We say that such a sequence cannot be refined more than $\lme$ times, i.e. $|\{t|R_{t+1}\neq R_{t}\}|\le \lme$. Lemma~\eqref{eq:shrinking_lemma} along with the fact that the algorithm does not refine regions with diameter smaller than $\mindiameter_T$ give us that 
$$\lme \le -\log\left(\delta_T/\sqrt{d} \right) = -\log\left(\minEdgeT\right)$$
\end{corollary}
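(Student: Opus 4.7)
The plan is to argue by induction on the number of strict refinements in the nested chain, combining the halving guarantee of Lemma~\ref{eq:shrinking_lemma} with the stopping rule that the algorithm never refines a region whose diameter is already at most $\mindiameter_T$.

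First I would set up notation. Let $t_1 < t_2 < \cdots < t_m$ be the time steps at which $R_{t_i+1} \neq R_{t_i}$, so that $m = |\{t : R_{t+1}\neq R_t\}|$ is precisely the quantity we wish to bound. At every such $t_i$, the region $R_{t_i+1}$ is a proper descendant (in the tree $T_\calH$) of $R_{t_i}$, produced by the refinement procedure of Algorithm~\ref{alg:hrd_update}. The root of the tree is $[0,1]^d$, whose diameter is $\sqrt{d}$.

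The key step is the diameter contraction. Applying Lemma~\ref{eq:shrinking_lemma} at each index $i$, the child of $R_{t_i}$ that contains $R_{t_i+1}$ has diameter at most $\tfrac12 \Delta R_{t_i}$, and since $R_{t_i+1}$ is contained in that child we get $\Delta R_{t_i+1} \le \tfrac12 \Delta R_{t_i}$. Between consecutive strict-refinement times $t_i$ and $t_{i+1}$ the region is unchanged, so $\Delta R_{t_{i+1}} = \Delta R_{t_i+1}$. Iterating from the initial diameter $\sqrt{d}$ yields
\[
\Delta R_{t_m+1} \;\le\; \frac{\sqrt{d}}{2^{m}}.
\]

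Finally, I would invoke the stopping rule: the refinement criterion $\refinementCrtr{\cdot}$ in Algorithm~\ref{alg:hrd_update} declares any region of diameter at most $\mindiameter_t \ge \mindiameter_T$ as satisfied, so such regions are never subdivided. Consequently, each region appearing on the left-hand side of a strict refinement must have diameter strictly greater than $\mindiameter_T$. In particular $\Delta R_{t_m} > \mindiameter_T$, and since $\Delta R_{t_m+1} \ge \mindiameter_T / \text{(constant)}$ is not needed---what matters is that in order for an $m$-th strict refinement to be allowed, we needed $\Delta R_{t_m} > \mindiameter_T$, hence $\sqrt{d}/2^{m-1} > \mindiameter_T$, giving
\[
m \;\le\; \log_2\!\left(\frac{\sqrt{d}}{\mindiameter_T}\right) + 1 \;=\; -\log\!\left(\frac{\mindiameter_T}{\sqrt{d}}\right) + O(1) \;=\; -\log\!\left(\minEdgeT\right) + O(1),
\]
matching the claimed bound on $\lme$ (absorbing the additive constant as the statement gives an upper bound up to such slack). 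There is no real obstacle here; the only mild subtlety is the off-by-one between ``diameter after $m$ refinements'' and ``diameter required to allow the $m$-th refinement,'' which is handled by the strict inequality in the refinement criterion and can be absorbed into the logarithm.
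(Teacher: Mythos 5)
Your argument is correct and is essentially the paper's own (the corollary is presented as immediate from Lemma~\ref{eq:shrinking_lemma} plus the stopping rule, which is exactly what you spell out): each strict refinement in the nested chain at least halves the diameter starting from $\sqrt{d}$, and a region is only halved if its diameter strictly exceeds $\mindiameter_t \ge \mindiameter_T$. The one blemish is your trailing $+1$ (or $+O(1)$), which the stated bound does not carry and which you should not need to ``absorb'': since $\minEdgeT$ is assumed to be a power of $2$, the quantity $-\log_2\left(\minEdgeT\right)$ is a positive integer, so your strict inequality $m-1 < -\log_2\left(\minEdgeT\right)$ between integers already yields $m \le -\log_2\left(\minEdgeT\right)$ exactly.
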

The proof for the following Lemma is provided in \refsec{sec:appendix-approx}.
\begin{lemma}
  \label{eq:bounded_region_count}
  For any stream of length $N$, using the above algorithm, we have that the total number of regions that are added at step $t$ is at most $(9\sqrt{d}/\epshrd)^{d}\log(T^3)$ hence the total amount of regions in $\calR_{N}$ is $N (9\sqrt{d}/\epshrd)^{d} \log (T^3)$. 
\end{lemma}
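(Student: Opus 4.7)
}
The overall strategy is to fix a single step $t$, count the new regions in $\calR_t \setminus \calR_{t-1}$ at each diameter scale $\Delta$ separately, and then sum over the (logarithmically many) scales. Multiplying by $N$ will then yield the global bound on $|\calR_N|$. Since the only trigger for refinement at step $t$ is the arriving point $x_t$, the whole task reduces to a packing argument around $x_t$.

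For the key per-scale step, suppose $R$ is a newly created region with diameter $\Delta R = \Delta$. By construction of Algorithm~\algref{alg:hrd_update}, $R$ was produced by halving a parent region $P$ with $\Delta P = 2\Delta$ for which $\refinementCrtr{P}$ was false. Writing $r(P,x_t) := \min_{p \in P}\twonorm{p - x_t}$, this means $2\Delta > \max(\epshrd\, r(P,x_t)/2,\ \mindiameter_t)$, which in particular forces $r(P,x_t) < 4\Delta/\epshrd$. Therefore $R \subseteq P$ is contained in the ball around $x_t$ of radius $4\Delta/\epshrd + 2\Delta$. Because the algorithm only ever halves the root cube $[0,1]^d$, every region of diameter $\Delta$ is an axis-aligned hypercube of edge length $\Delta/\sqrt{d}$ sitting on a uniform grid; a standard volume/grid-packing argument bounds the number of such cells intersecting a ball of radius $O(\Delta/\epshrd)$ by $(9\sqrt{d}/\epshrd)^d$ (for $\epshrd \le 1$, after absorbing small numerical factors).

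To finish, I sum over all diameter scales at which new regions can appear. The decomposition starts from a single region of diameter $\sqrt{d}$ and is never refined past diameter $\mindiameter_t \ge \mindiameter_T = \minDT$; each halving divides the diameter by $2$. Hence the number of scales is at most $\log_2(\sqrt{d}/\mindiameter_T) = \log_2(2T^3\sqrt{d}/\epshrd)$, which, after absorbing sub-dominant $\log(\sqrt{d}/\epshrd)$ terms into the per-scale constant, is at most $\log(T^3)$. Combining the per-scale count with the number of scales yields at most $\maxBranch$ new regions per step, and summing over the $N$ steps gives the claimed bound $N (9\sqrt{d}/\epshrd)^d \log(T^3)$ on the total number of regions in $\calR_N$.

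The main technical point to execute carefully is the packing argument, and specifically tracking that it is the \emph{parent} $P$ rather than $R$ itself to which the refinement criterion applies; this is why the correct bounding ball has radius $4\Delta/\epshrd + 2\Delta$ and not $\Delta/\epshrd$, and it is this factor that sets the constant in front of $(\sqrt{d}/\epshrd)^d$. Everything else -- the grid-packing count and the geometric progression of diameters -- is routine.
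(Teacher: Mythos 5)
Your proposal is correct and follows essentially the same route as the paper: a packing argument around $x_t$ that pairs each region's diameter with its admissible distance from $x_t$ via the refinement criterion, counted over logarithmically many levels (you organize by diameter scale and bound the distance, the paper organizes by distance shells and bounds the diameter — dual bookkeeping of the same count). Your absorption of the extra $\log(\sqrt{d}/\epshrd)$ scales into the per-scale constant is at the same level of looseness as the paper's own treatment of the shell count and of the "$\le (9\sqrt{d}/\epshrd)^d$ for sufficiently small $\epshrd$" step, so no substantive gap.
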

\begin{corollary}
\label{eq:lmb}
Let $R\in \calR_t$ be any region at step $t$ and $S=\left\{R'\in \calR_{t+1}|R'\subseteq R\right\}$ be the set of regions that refine $R$ in the next time step, then we have that the log max branch $\lmb$ is
$$\displaystyle \lmb = \log \max_{t,R}|S|\le \log \left(\left(\frac{9\sqrt{d}}{\epshrd}\right)^d\log(T^3)\right)$$
Due to Lemma~\eqref{eq:bounded_region_count}. Furthermore for sufficiently large $T$ we have that $\lmb\le d\cdot \lme$
\end{corollary}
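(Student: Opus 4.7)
The plan is to deduce both inequalities directly from the already-established global region count in Lemma~\eqref{eq:bounded_region_count} and the depth bound in Corollary~\eqref{eq:lme}. First, I would fix an arbitrary time $t$ and region $R\in\calR_t$ and examine $S=\{R'\in\calR_{t+1}\mid R'\subseteq R\}$. The refinement step of Algorithm~\algref{alg:hrd_update} either leaves $R$ intact (so $|S|=1$) or removes $R$ and replaces it, via one or more halvings, by strictly smaller subregions. In the latter case every element of $S$ is a region newly added to $\calR_{t+1}$, so $|S|$ is bounded by the number of regions added during the $(t+1)$-th step. Lemma~\eqref{eq:bounded_region_count} bounds this quantity by $(9\sqrt{d}/\epshrd)^d\log(T^3)$; taking logarithms and the maximum over $t$ and $R$ yields the first bound on $\lmb$.

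For the second inequality, I would substitute the two explicit upper bounds and compare term by term. Corollary~\eqref{eq:lme} gives $\lme \le \log(2T^3\sqrt{d}/\epshrd)$, hence $d\lme \le d\log 2 + 3d\log T + d\log(\sqrt{d}/\epshrd)$. The bound just proved expands to $\lmb \le d\log 9 + d\log(\sqrt{d}/\epshrd) + \log\log(T^3)$. After cancelling the common $d\log(\sqrt{d}/\epshrd)$ term, the inequality $\lmb \le d\lme$ reduces to $d\log(9/2) + \log\log(T^3) \le 3d\log T$, which holds for all sufficiently large $T$: the right-hand side grows like $d\log T$, while the dependence of the left-hand side on $T$ is only $\log\log T$.

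The main obstacle I anticipate is purely a bookkeeping one. The \textbf{while} loop in Algorithm~\algref{alg:hrd_update} may halve children, grandchildren, and so on of $R$ within a single update step, so $S$ is not just the set of $2^d$ immediate halvings of $R$, and a naive per-level count would not suffice. The proposed argument sidesteps this by simply charging every descendant of $R$ produced at step $t+1$ to the global count of newly added regions from Lemma~\eqref{eq:bounded_region_count}, which obviates any explicit analysis of the depth or shape of the subtree rooted at $R$.
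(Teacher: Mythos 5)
Your argument is correct and is exactly the route the paper intends: the paper gives no separate proof of this corollary, deriving the first bound by charging every region of $S$ to the regions newly added at step $t+1$ and invoking Lemma~\eqref{eq:bounded_region_count}, and the claim $\lmb\le d\cdot\lme$ by the same term-by-term comparison you perform (compare the proof of Theorem~\eqref{thm:apx_reg_runtime}, where $\lme$ is used with equality as $\log(2T^3\sqrt{d}/\epshrd)$). One cosmetic point: for the second inequality you need $\lme$ to \emph{equal} (or be at least) $\log(2T^3\sqrt{d}/\epshrd)$ rather than merely be bounded above by it, as you wrote; since the paper itself takes $\lme$ to be this value, your cancellation and the reduction to $d\log(9/2)+\log\log(T^3)\le 3d\log T$ go through.
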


We will now present a few properties relating to the approximation of the \kmeans{} problem.
\begin{lemma}
  \label{def:local_BTL}
  Let $\epshrd > 0$.
  Consider an online instance of the weighted \kmeans{} problem where
  a new point and its weight are inserted at each time step. Let $x_{t}$ be the weighted point that arrives at time $t$, such that its weight is bounded by $t$.
  Consider the hierarchical region decomposition with parameter $\epshrd$ produced by the 
  algorithm $\calH_t=  \{\calR_1,\ldots,\calR_t\}$, for some time step $t$.

  Consider two multisets of $k$ centers
  $S = \{c_1,\ldots,c_k\},S' = \{c'_1,\ldots,c'_k\}$ such that for all $i\in\{1\ldots k\}$, $c_i$ and $c'_i$ are contained in the same region of the Region Decomposition of step $t$. Then, the following holds.

  $$\forall 1 \le \tau < t, ~~\loss(S', x_{\tau}) \le (1+\epshrd)\loss(S, x_{\tau}) + \epshrd/\tau^5$$

\end{lemma}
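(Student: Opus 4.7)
The key structural observation is that because $\calR_t$ is a refinement of every $\calR_\tau$ for $\tau \le t$, if $c_i$ and $c'_i$ lie in a common region of $\calR_t$ then they lie in a common ancestor region $R_\tau^{(i)} \in \calR_\tau$ at every earlier time. Moreover, the refinement step at time $\tau$ halves regions until every element of $\calR_\tau$ satisfies the refinement criterion induced by $x_\tau$, so
\[
  \diameter(R_\tau^{(i)}) \;\le\; \max\!\Bigl(\tfrac{\epshrd}{2}\, r_{i,\tau},\ \mindiameter_\tau\Bigr),
  \qquad r_{i,\tau} := \min_{p \in R_\tau^{(i)}} \twonorm{p - x_\tau}.
\]
In particular $\twonorm{c'_i - c_i} \le \diameter(R_\tau^{(i)})$ for every $i$.

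Next, to relate the losses, I would introduce $i^{\star} = \argmin_i \twonorm{c_i - x_\tau}^2$, so that $\loss(S, x_\tau) = \omega(x_\tau)\,\twonorm{c_{i^\star} - x_\tau}^2$ while $\loss(S', x_\tau) \le \omega(x_\tau)\,\twonorm{c'_{i^\star} - x_\tau}^2$ (since $c'_{i^\star}$ is only a candidate nearest center for $S'$). The triangle inequality applied to the pair $c_{i^\star}, c'_{i^\star} \in R_\tau^{(i^\star)}$ gives
\[
  \twonorm{c'_{i^\star} - x_\tau} \;\le\; \twonorm{c_{i^\star} - x_\tau} + \diameter(R_\tau^{(i^\star)}).
\]
Squaring via the sharpened inequality $(a+b)^2 \le (1+\alpha)a^2 + (1 + 1/\alpha)b^2$ with $\alpha = \Theta(\epshrd)$ then reduces the lemma to controlling $\diameter(R_\tau^{(i^\star)})^2$ against the two terms allowed on the right-hand side.

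The analysis splits on which half of the refinement criterion dominates. In the \emph{multiplicative regime} $\diameter(R_\tau^{(i^\star)}) \le \tfrac{\epshrd}{2}\, r_{i^\star,\tau} \le \tfrac{\epshrd}{2}\twonorm{c_{i^\star} - x_\tau}$, where the last inequality uses $c_{i^\star} \in R_\tau^{(i^\star)}$; here the cross term and the $\diameter^2$ term are both bounded by a constant times $\epshrd\,\twonorm{c_{i^\star} - x_\tau}^2$ and can be absorbed into the $(1+\epshrd)$ multiplicative factor. In the \emph{additive regime} $\diameter(R_\tau^{(i^\star)}) \le \mindiameter_\tau = \epshrd/(2\tau^3)$, so the residual contribution is at most $O(1/\epshrd) \cdot \epshrd^2/\tau^6 = O(\epshrd/\tau^6)$, which after multiplication by the weight bound $\omega(x_\tau) \le \tau$ produces an additive error of $O(\epshrd/\tau^5)$, matching the stated bound.

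The main technical obstacle will be bookkeeping of constants: a naive application of $(a+b)^2 \le (1+\alpha) a^2 + (1+1/\alpha)b^2$ in the multiplicative regime produces a factor of the form $(1 + \epshrd + O(\epshrd^2))$ rather than the tight $(1 + \epshrd)$, so one must either pick $\alpha$ slightly below $\epshrd$, tighten the constant in front of $\epshrd$ inside the refinement criterion, or absorb the second-order slack into the additive term (which dominates $\epshrd/\tau^5$ in the relevant range). Apart from this constant-chasing, the proof is a direct consequence of the monotone refinement structure of $\calH_t$, the defining property of $\refinementCrtr{\cdot}$, and elementary geometric inequalities.
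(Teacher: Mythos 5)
Your proposal is correct and follows essentially the same route as the paper's proof: restrict attention to the center $c_{i^\star}$ of $S$ nearest to $x_\tau$, bound $\twonorm{c_{i^\star}-c'_{i^\star}}$ by the diameter of their common region via the refinement criterion induced by $x_\tau$, split on whether the multiplicative term $\epshrd r/2$ or the additive floor $\epshrd/(2\tau^3)$ dominates, and finish by multiplying through by the weight bound $\tau$. The second-order slack you flag in the multiplicative regime (the factor $1+\epshrd+O(\epshrd^2)$ from the cross term) is also present, and silently absorbed, in the paper's own expansion of the square, so your proposed remedies address a cosmetic constant issue rather than a genuine gap.
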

We now extend this lemma to solutions for the \kmeans{} problem.
Given a set of $k$ centers $S = \{c_1,\ldots,c_k\}$ and a hierarchical
region decomposition $\calH = \{\calR_1,\ldots,\calR_t\}$, we
associate a sequence $\{\tS_1, \ldots, \tS_t\}$
of \emph{approximate centers for $S$
  induced by $\calH$} by picking for each $c_i$ the
approximation of $c_i$ induced by $\calH$ at step $t$-- the centroid of the region in $\calR_t$ that contains $c_i$. Note that this is a multiset. 
The next lemma follows directly from applying Lemma~\eqref{def:local_BTL} to these approximate centers, and summing over $t$.

\begin{lemma}
	\label{eq:btl_apx_opt}
  For the optimal set of candidate centers in hindsight $S^*$ and $\tS_{t}$ the approximate centers induced by the Hierarchical Region Distribution at time step $t$, for a weighted stream $X_{1:t}$
	$$ (1-\epshrd)\opt - 2\epshrd \le \sum_{t=1}^T\loss(\tS_{t+1}, x_{t}) \le
	 (1+\epshrd)\opt + 2\epshrd $$
\end{lemma}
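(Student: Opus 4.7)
My plan is to derive the lemma by a pointwise application of Lemma~\ref{def:local_BTL} (which already encodes the geometric perturbation analysis), followed by a single summation over $t$. The whole argument is essentially a ``one-liner given the hard work,'' so I expect no real obstacle beyond keeping the time indices straight.

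The first step is to unpack the definition of the approximate centers. By construction, $\tS_{t+1}$ is obtained from $S^*=\{c_1^*,\ldots,c_k^*\}$ by replacing every $c_i^*$ with the centroid of the region in $\calR_{t+1}$ that contains $c_i^*$. Hence $c_i^*$ and its counterpart in $\tS_{t+1}$ lie in a common region of $\calR_{t+1}$ for every $i\in[k]$. This is exactly the hypothesis of Lemma~\ref{def:local_BTL} with reference time step $t+1$.

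The second step is to invoke Lemma~\ref{def:local_BTL} at step $t+1$ in both directions. Taking $S=S^*$, $S'=\tS_{t+1}$ and specializing to $\tau=t<t+1$ gives the per-step upper bound
$$\loss(\tS_{t+1},x_t)\;\le\;(1+\epshrd)\,\loss(S^*,x_t)+\epshrd/t^5.$$
Since the hypothesis ``corresponding centers live in a common region'' is symmetric in $S,S'$, the same lemma applied with the roles swapped yields $\loss(S^*,x_t)\le(1+\epshrd)\loss(\tS_{t+1},x_t)+\epshrd/t^5$. Rearranging with the elementary inequality $1/(1+\epshrd)\ge 1-\epshrd$ for $\epshrd\in(0,1)$ produces the matching lower bound
$$\loss(\tS_{t+1},x_t)\;\ge\;(1-\epshrd)\,\loss(S^*,x_t)-\epshrd/t^5.$$

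The final step is to sum both inequalities over $t=1,\ldots,T$, use the defining identity $\opt=\sum_{t=1}^T\loss(S^*,x_t)$, and bound the error tail by $\sum_{t=1}^T 1/t^5\le \zeta(5)<2$. This absorbs cleanly into the additive $2\epshrd$ term on each side, yielding the two-sided bound in the statement. The only subtlety worth flagging is the time-index shift (the approximation $\tS_{t+1}$ at step $t+1$ is used to control the loss on $x_t$), but this is precisely why Lemma~\ref{def:local_BTL} is stated with the strict inequality $\tau<t$ on the reference step: one gains a full time step of ``hindsight'' refinement before being asked to approximate the loss on $x_t$.
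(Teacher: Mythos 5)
Your proposal is correct and follows the paper's own route: the paper derives this lemma in exactly the same way, by applying Lemma~\ref{def:local_BTL} to the approximate centers $\tS_{t+1}$ (reference step $t+1$, $\tau=t$) and summing over $t$. Your explicit handling of the lower bound via the symmetry of the common-region hypothesis together with $1/(1+\varepsilon)\ge 1-\varepsilon$, and the tail bound $\sum_{t\ge1}t^{-5}=\zeta(5)<2$, simply fills in details the paper leaves implicit.
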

As Lemma~\eqref{eq:lme} gives that that $\tS_{t+1}\neq \tS_{t}$ at most $k\cdot \lme$ times (each of the $k$ regions may be refined $\lme$ times), and the loss is bounded by $d$, then along with  Lemma~\eqref{eq:btl_apx_opt} we get the following corollary.
\begin{corollary}
	\label{eq:ftl_apx_opt}
	For the optimal set of candidate centers in hindsight $S^*$ and $\tS_{t}$ the approximate centers induced by the Hierarchical Region Distribution at time step $t$, for an \textbf{unweighted} stream $X_{1:t}$
   $$ \sum_{t'=1}^T\loss(\tS_{t'}, x_{t'}) \le
	(1+\epshrd)\opt+kd \lme+2\epshrd$$
\end{corollary}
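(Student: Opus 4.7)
The plan is to reduce to Lemma~\eqref{eq:btl_apx_opt} by a standard ``be-the-leader vs.\ follow-the-leader'' accounting. I would start by writing
$$\sum_{t'=1}^T \loss(\tS_{t'}, x_{t'}) = \sum_{t'=1}^T \loss(\tS_{t'+1}, x_{t'}) + \sum_{t'=1}^T \bigl(\loss(\tS_{t'}, x_{t'}) - \loss(\tS_{t'+1}, x_{t'})\bigr),$$
so that the first sum is exactly the quantity controlled by Lemma~\eqref{eq:btl_apx_opt} and the second measures the cost of not knowing $x_{t'}$ in advance.

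For the first sum I would observe that an unweighted stream is a special case of the weighted setting with every weight equal to $1$, which is trivially bounded by $t$, so Lemma~\eqref{eq:btl_apx_opt} applies and gives $\sum_{t'=1}^T \loss(\tS_{t'+1}, x_{t'}) \le (1+\epshrd)\opt + 2\epshrd$.

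For the second sum I would reason as follows. Whenever $\tS_{t'} = \tS_{t'+1}$ the summand vanishes; the approximate image of a particular center $c_i \in S^*$ changes only at steps $t'$ at which the \HRD{} algorithm refines the region of $\calR_{t'}$ containing $c_i$. Corollary~\eqref{eq:lme} asserts that each of the $k$ nested sequences of regions covering the centers of $S^*$ can be refined at most $\lme$ times, so across the whole run at most $k\lme$ indices $t'$ can contribute a nonzero term. At any such index the summand is upper bounded by $\loss(\tS_{t'}, x_{t'}) \le d$, since $x_{t'} \in [0,1]^d$ has unit weight; multiplying gives an additive overhead of at most $kd\lme$, and adding the two bounds yields the claimed inequality.

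The only real delicacy is recognizing that the unweighted assumption is precisely what keeps the per-step shift cost at $d$ rather than at a weight-dependent quantity as large as $t d$; once this is in place, both ingredients (the BTL bound and the refinement count) are off the shelf from the preceding lemmas, and no further calculation is needed.
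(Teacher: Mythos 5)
Your proposal is correct and is essentially the paper's own argument, which likewise passes from the be-the-leader sum $\sum_{t'}\loss(\tS_{t'+1},x_{t'})$ controlled by Lemma~\eqref{eq:btl_apx_opt} to the follow-the-leader sum by noting that $\tS_{t'}\neq\tS_{t'+1}$ at most $k\lme$ times (one refinement budget of $\lme$ per center, via Corollary~\eqref{eq:lme}) and that each such step costs at most the maximal unweighted loss $d$. Your added remark that the unweighted assumption is what caps the per-switch cost at $d$ is a fair observation, but otherwise the two arguments coincide.
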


\subsection{\label{sub:mtmw}\MTMW{}-- \MWUA{} for Tree Structured Experts}
We present an algorithm which we name \emph{Mass Tree \MWUA{} (\MTMW{})} which obtains low regret in the setting of \emph{Prediction from Expert Advice}, as described in \citep{arora2012multiplicative}, for a set of experts that has the tree structure that will soon follow. The algorithm will be a modification of the \emph{Multiplicative Weights Algorithm} and we will present simple modification to the proof of Theorem (2.1) of \cite{arora2012multiplicative}, to obtain a regret bound.

Let $\normloss(\cdot,\cdot)$ denote a bounded loss function in $[-1,1]$. Consider $\tree_T$, a tree whose leaves are all of depth $T$ and the vertices correspond to expert predictions. The expert set is the set of all paths from the root to the leaves, denoted $\paths(\tree)$. We say that for a path $p=(v_1,\ldots,v_T)\in \paths(\tree)$, the prediction that is associated with $p$ at step $t$ is $v_t$ such that we can write the loss of the path w.r.t. the stream of elements $X_{1:T}$ as $\normloss(p,X_{1:T})=\sum_{t=1}^T\normloss(v_t,x_t)$. 

We associate a \emph{mass} to any vertex in $\tree_T$ as follows.
Define the $\mathrm{mass}$ of the root as $1$, and the mass of any other node $v$, denoted $\mass(v)$, as $\mass(v)=\frac{\mass(v')}{\mathrm{deg}(v')}$, where $v'$ is its parent and $\mathrm{deg}(v')$ is the out degree of $v'$. We define the mass of a path as the mass of the leaf node at the end of the path.
before we move on to prove the regret bound, we provide a useful lemma.

\begin{lemma}[Preservation of Mass]
\label{eq:pres_mass}
Let $v$ be a vertex in the tree, $\tilde{\tree}$ a subtree of $\tree$ with $v$ as root, and $\tilde{V}$ the leaves of $\tilde{\tree}$, then $\displaystyle \mass(v)=\sum_{v' \in\tilde{V}}\mass(v')$
\end{lemma}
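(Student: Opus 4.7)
The plan is to prove this by induction on the depth of $\tilde{\tree}$ (equivalently on the number of internal nodes), using the recursive definition of mass. First I would fix the interpretation of ``subtree with $v$ as root'': a connected subgraph of $\tree$ rooted at $v$ such that every internal node of $\tilde{\tree}$ retains all of its children in $\tree$. This is the right interpretation here, since otherwise the mass of a proper subset of children would not sum to the parent's mass, and the statement would fail.

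For the base case, $\tilde{\tree}$ has depth $0$, meaning it consists only of $v$. Then $\tilde{V} = \{v\}$ and the equality $\mass(v) = \sum_{v' \in \tilde{V}} \mass(v') = \mass(v)$ holds trivially.

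For the inductive step, suppose $\tilde{\tree}$ has depth at least $1$, and let $v_1, \ldots, v_{\mathrm{deg}(v)}$ be the children of $v$ in $\tree$ (which, by the interpretation above, are all present in $\tilde{\tree}$). For each $i$, let $\tilde{\tree}_i$ be the subtree of $\tilde{\tree}$ rooted at $v_i$, with leaf set $\tilde{V}_i$. Each $\tilde{\tree}_i$ has strictly smaller depth than $\tilde{\tree}$, so by the inductive hypothesis $\mass(v_i) = \sum_{v' \in \tilde{V}_i} \mass(v')$. By the definition of mass, $\mass(v_i) = \mass(v) / \mathrm{deg}(v)$. The leaves partition as $\tilde{V} = \bigsqcup_{i=1}^{\mathrm{deg}(v)} \tilde{V}_i$ (when $v$ itself is a leaf of $\tilde{\tree}$ we are in the base case), so summing gives
\[
\sum_{v' \in \tilde{V}} \mass(v') \;=\; \sum_{i=1}^{\mathrm{deg}(v)} \sum_{v' \in \tilde{V}_i} \mass(v') \;=\; \sum_{i=1}^{\mathrm{deg}(v)} \mass(v_i) \;=\; \mathrm{deg}(v) \cdot \frac{\mass(v)}{\mathrm{deg}(v)} \;=\; \mass(v).
\]

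There is no real obstacle here beyond pinning down the definition of subtree so that the partition of $\tilde{V}$ into the $\tilde{V}_i$ is exhaustive. Once that is fixed, everything else is a routine telescoping: each application of the recursive rule $\mass(\text{child}) = \mass(\text{parent})/\mathrm{deg}(\text{parent})$ redistributes mass without loss, so iterating all the way down to the leaves of $\tilde{\tree}$ preserves total mass.
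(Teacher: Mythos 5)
Your proof is correct and follows essentially the same route as the paper's: induction on the height of $\tilde{\tree}$, splitting the leaf set over the children's subtrees, applying the inductive hypothesis to each, and summing via the recursive definition $\mass(v_i)=\mass(v)/\mathrm{deg}(v)$. The only difference is that you explicitly pin down the meaning of ``subtree'' (all children of every internal node retained), which the paper leaves implicit; this is a harmless and arguably helpful clarification.
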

We move on to bound the regret of the algorithm.
\begin{theorem}
\label{thm:normalized_mtmw}
	For a tree $\tree_T$ running \MWUA{} over the expert set that corresponds to the paths of the final tree, $\paths(\tree_T)$, is possible even if the tree is known up to depth $t$ at any time step $t$, provided the initial weight for path $p$ is modified to $\mass(p)$. The algorithm has a regret with respect to any path $p\in\paths(\tree_T)$ of
	$$ \sqrt{-T ln(\mass(p))} $$ 
and has a time complexity of $O(|\tree_T|)$, the number of vertices of $\tree_T$.
\end{theorem}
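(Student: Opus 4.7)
The plan is to adapt the standard MWUA potential argument to two modifications: non-uniform initial weights $w_p^{(0)} = \mass(p)$ (rather than the usual $1/|\paths|$), and the fact that the tree $\tree_T$ is revealed level by level. The single enabler for both is Lemma~\ref{eq:pres_mass}, which guarantees that the total mass of any subtree equals the mass at its root.

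For the regret bound, I would define the per-path weights $w_p^{(t+1)} = w_p^{(t)}(1 - \eta\,\normloss(v_t^p, x_t))$, where $v_t^p$ denotes the depth-$t$ vertex along path $p$, and the potential $\Phi_t = \sum_{p \in \paths(\tree_T)} w_p^{(t)}$. Applying Lemma~\ref{eq:pres_mass} with $\tilde{\tree} = \tree_T$ gives $\Phi_0 = \mass(\text{root}) = 1$, so the standard argument proceeds with only a change of ``prior''. From $\Phi_{t+1} = \Phi_t(1 - \eta\langle \pi^{(t)}, \ell^{(t)}\rangle)$, the lower bound $\Phi_T \ge w_p^{(T)} = \mass(p)\prod_{t}(1 - \eta\,\normloss(v_t^p, x_t))$, and the standard inequalities $\ln(1-x) \le -x$ and $\ln(1-\eta x) \ge -\eta x - \eta^2 x^2$ valid for $|x|\le 1$ and $\eta$ small, tuning $\eta \asymp \sqrt{-\ln \mass(p)/T}$ yields the claimed $O(\sqrt{-T\ln \mass(p)})$ regret relative to any fixed path $p$.

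For efficiency, I would never materialise the exponentially many path weights explicitly. Instead, I would maintain at each vertex $v$ of the currently-revealed tree an aggregate $W^{(t)}(v) = \sum_{p \ni v} w_p^{(t)}$, initialised by $W^{(0)}(v) = \mass(v)$, which is itself a direct consequence of Lemma~\ref{eq:pres_mass}. Sampling a depth-$t$ vertex with the correct distribution then amounts to a top-down walk from the root, at each step picking a child with probability proportional to its $W^{(t)}$ value; the multiplicative update needs to be applied only at depth-$t$ vertices and then propagated up the spine to the root. When the tree is extended from depth $t$ to $t+1$, each new child $u$ of a former leaf $v$ is assigned $W^{(t+1)}(u) = W^{(t+1)}(v)/\deg(v)$, exactly mirroring the mass-preservation rule. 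A careful amortised accounting, in which each vertex of $\tree_T$ is charged a constant amount of work upon its creation and another upon each visit, gives the total runtime $O(|\tree_T|)$.

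The main obstacle is showing that this partial-tree implementation produces the same sequence of distributions as the conceptual full-tree MWUA. The required invariant is that for every $t$ and every vertex $v$ of the revealed portion of $\tree_T$, the maintained $W^{(t)}(v)$ equals $\sum_{p \ni v} w_p^{(t)}$ with the sum running over all paths in the full $\tree_T$ passing through $v$. This follows by induction on $t$: the base case is exactly Lemma~\ref{eq:pres_mass}, while the inductive step combines the multiplicative update (which factors across paths sharing a depth-$t$ prefix) with mass preservation applied to the extension of each current leaf. Once this identity is in place, the regret bound derived above applies verbatim, completing the proof.
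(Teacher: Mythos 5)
Your proposal is correct and follows essentially the same route as the paper: both reduce to the standard Arora--Hazan--Kale potential argument with the uniform prior replaced by the mass prior (so $\Phi^{(1)}=1$ by Lemma~\ref{eq:pres_mass} and $\ln n$ becomes $-\ln\mass(p)$), and both rest on the key identity $\sum_{p\ni v} w_p^{(t)} = \mass(v)\cdot\prod_{\tau<t}(1-\eta\,\normloss(v_\tau,x_\tau))$, which is what makes the sampling implementable on the partially revealed tree in $O(|\tree_T|)$ total time. Your vertex-aggregate bookkeeping is just a slightly more explicit rendering of the paper's per-level weight computation.
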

\begin{corollary}
	\label{thrm:mwua_reg}
	\MTMW{} for a loss function bounded by $d$ obtains a regret of $d\sqrt{-T ln(\mass(p))}$ by using a normalized loss function
\end{corollary}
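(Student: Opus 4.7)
The plan is to reduce the bounded-by-$d$ case to the $[-1,1]$-bounded case already handled by Theorem~\ref{thm:normalized_mtmw} via a straightforward rescaling argument. First I would introduce the normalized loss $\normloss(v,x) = \loss(v,x)/d$. Since $\loss$ takes values in an interval of length at most $d$ (in our clustering application, squared distances in $[0,1]^d$ lie in $[0,d]$), the normalized loss $\normloss$ lies in $[0,1] \subseteq [-1,1]$, so it satisfies the hypothesis of Theorem~\ref{thm:normalized_mtmw}.

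Next I would run \MTMW{} on the normalized losses $\normloss(\cdot,x_t)$ with initial weights $\mass(p)$ for every path $p \in \paths(\tree_T)$. By Theorem~\ref{thm:normalized_mtmw}, the cumulative normalized loss $\sum_{t=1}^T \normloss(v_t,x_t)$ of the algorithm satisfies
\[
\sum_{t=1}^T \normloss(v_t,x_t) - \sum_{t=1}^T \normloss(p_t,x_t) \le \sqrt{-T\ln(\mass(p))}
\]
for any path $p = (p_1,\ldots,p_T) \in \paths(\tree_T)$. The key observation is that the predictions issued by \MTMW{} are determined only by the (normalized) losses and the tree structure; in particular, the set of predictions is the same whether we feed the algorithm $\loss$ or $\loss/d$, as long as we scale consistently.

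Finally, multiplying both sides of the displayed inequality by $d$ and using linearity of the sum, we recover
\[
\sum_{t=1}^T \loss(v_t,x_t) - \sum_{t=1}^T \loss(p_t,x_t) \le d\sqrt{-T\ln(\mass(p))},
\]
which is exactly the claimed regret bound for the original loss function bounded by $d$. The main (and essentially only) thing to verify is that the regret scales linearly with the rescaling factor, which is immediate because both the algorithm's cumulative loss and the comparator's cumulative loss scale by the same factor $d$. Hence there is no real obstacle beyond confirming that normalization preserves the applicability of Theorem~\ref{thm:normalized_mtmw}, giving the corollary.
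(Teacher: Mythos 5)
Your proposal is correct and matches the paper's intended argument: the corollary is obtained exactly by normalizing the loss to $\loss/d \in [0,1]$, invoking Theorem~\ref{thm:normalized_mtmw}, and rescaling the resulting regret bound by $d$. The paper treats this as immediate and gives no separate proof, so there is nothing further to check beyond the linear scaling of regret that you already verified.
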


\subsection{Approximate Regret Bound}
We will now combine the three components described above to form the final algorithm. First, we will show the main property of a Hierarchical Region Decomposition that is constructed according to the points that are added to form the sequence $\{\coreset{t}\}_{t=1}^T$, which is analogous to Lemma~\eqref{eq:ftl_apx_opt}. Next we define the \emph{k-tree} structure that corresponds to a Hierarchical Region Decomposition, and show that \MTMW{} performs well on this k-tree. Lastly we show that an intelligent choice of parameters for $\epsc$ and $\epshrd$ allows Algorithm~\algref{alg:main_alg} to obtain our main result, Theorem~\eqref{thm:apx_reg_runtime}.

\begin{lemma}
	\label{eq:coreset_ftl_apx_opt}
	Let $\calH$ be a Hierarchical Region Decomposition with parameter $\epshrd$ that was constructed according to $\{\calQ_{t}\}_{t=1}^T$. 
	For $S^*$ the best $k$ cluster centers in hindsight and $\tS_t$ the approximate centers of $S^*$ induced by $\calH$ we have that
    $$ \sum_{t=1}^T\loss(\tS_{t}, x_{t}) \le (1+\epsc+8(\epshrd + \epsc) k\lme)\opt + dk\lme$$
\end{lemma}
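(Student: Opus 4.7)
The plan is to combine the coreset approximation from Lemma~\ref{lem:coreset_lemma} with the HRD approximation guarantees of Lemma~\ref{def:local_BTL} and Corollary~\ref{eq:ftl_apx_opt}. Since the HRD is refined with respect to the weighted coreset stream $\{\calQ_t\}$, the HRD-approximation statements apply directly to losses on the coreset, and the coreset property then lets us transfer those bounds back to the original unweighted stream. I would decompose
\[
\sum_{t=1}^{T} \loss(\tS_t, x_t) \;=\; \loss(\tS_T, X_{1:T}) \;+\; \sum_{t=1}^{T} \bigl[\loss(\tS_t, x_t) - \loss(\tS_T, x_t)\bigr]
\]
and bound the two pieces separately.

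For the first piece, I would apply Lemma~\ref{def:local_BTL} to the weighted coreset stream: since $\tS_T$ and $S^*$ lie in the same regions of $\calR_T$, each coreset point $q$ inserted at time $\tau(q)$ with weight $w(q) \le \tau(q)$ satisfies $w(q)\loss(\tS_T, q) \le (1+\epshrd)\,w(q)\loss(S^*, q) + \epshrd/\tau(q)^4$. Summing and using $\sum_{\tau} \tau^{-4} = O(1)$ yields $\loss(\tS_T, \calQ_T) \le (1+\epshrd)\loss(S^*, \calQ_T) + O(\epshrd)$. Two applications of the coreset property of Lemma~\ref{lem:coreset_lemma} then give $\loss(\tS_T, X_{1:T}) \le (1+O(\epsc+\epshrd))\opt + O(\epshrd)$, accounting for the leading $(1+\epsc)$ in the target bound.

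For the second piece I would partition $\{1,\dots,T\}$ into $M \le k\lme$ maximal intervals $I_j = [a_j,b_j]$ on which $\tS_t$ is constantly equal to some $\tS^{(j)}$; the bound $M\le k\lme$ follows from Corollary~\ref{eq:lme} since each of the $k$ centers can migrate to a strict subregion at most $\lme$ times. On each $I_j$, both $\tS^{(j)}$ and $\tS_T$ lie in the same regions of $\calR_{b_j}$, so Lemma~\ref{def:local_BTL} applied to the coreset points in $\calQ_{b_j}$ gives $\loss(\tS^{(j)}, \calQ_{b_j}) - \loss(\tS_T, \calQ_{b_j}) \le \epshrd\loss(\tS_T, \calQ_{b_j}) + O(\epshrd)$. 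Translating to the original stream via Lemma~\ref{lem:coreset_lemma} applied to the prefixes $X_{1:b_j}$ and $X_{1:a_j-1}$ and subtracting yields a per-interval contribution bounded by a constant multiple of $(\epshrd+\epsc)\loss(S^*,X_{a_j:b_j})$ up to lower-order terms. Summing over $M\le k\lme$ intervals produces the $8(\epshrd+\epsc)k\lme\,\opt$ multiplicative term, and the at most $k\lme$ transitions where $\tS_t$ changes discontinuously each contribute at most $d$ to $\loss(\tS_t, x_t)$, giving the additive $dk\lme$.

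The main obstacle will be the per-interval accounting: the coreset guarantee is two-sided and holds only for whole prefixes, so bounding $\loss(\tS^{(j)}, X_{a_j:b_j})$ requires subtracting two coreset approximations, which would naively inflate errors by a factor of $(1+\epsc)$ on every interval. Controlling this so that errors accumulate additively as $O(k\lme(\epshrd+\epsc))$ rather than multiplicatively as $(1+\epsc)^{k\lme}$ requires charging the prefix errors directly to $\loss(S^*,X_{1:b_j}) \le \opt$ at each $j$, using monotonicity of the cumulative loss and the fact that the per-interval contribution is itself controlled by $\loss(S^*,X_{a_j:b_j})$.
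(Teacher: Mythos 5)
Your proposal is correct and follows essentially the same route as the paper's proof: partition time into at most $k\lme$ intervals where $\tS_t$ is constant, rewrite interval losses as differences of prefix losses, apply the two-sided coreset guarantee and the region-decomposition comparison lemma to each prefix, and charge the non-telescoping $(\epsc+\epshrd)$-errors to $\opt$ once per interval, with the $\le k\lme$ transition points each contributing at most $d$ additively. The only cosmetic difference is that you split off $\loss(\tS_T, X_{1:T})$ and compare each $\tS^{(j)}$ to $\tS_T$ rather than directly to $S^*$, but the bookkeeping and error accounting are the same, and you correctly identify and resolve the key obstacle (prefix-only, two-sided coreset errors) exactly as the paper does.
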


Consider the region tree structure $T_{\calH_t}$ described in \refsec{sub:HRD} that corresponds to the Hierarchical Region Decomposition $\calH_t$ at step $t$ defined in \eqref{eq:coreset_ftl_apx_opt}. We define a \emph{$k$-region tree induced by $\calH_T$} as a level-wise $k$-tensor product of $T_{\calH_T}$, namely, a tree whose vertices at depth $t$ correspond to $k$-tuples of vertices of level $t$ of $T_{\calH_T}$. A directed edge from a vertex $(v_1\otimes\ldots \otimes v_k)$ of level $t$ to vertex $(u_1\otimes\ldots \otimes u_k)$ of level $t+1$ exists iff all the edges $(v_i,u_i)$ exists in $T_{\calH_T}$ for every $i\in 1\ldots k$. We define the \emph{$k$-center tree induced by $\calH_T$} or $k$-tree, as a tree with the same topology as the $k$-region tree, but the vertices correspond to multiset of centroids, rather than tensor products of regions, i.e. the $k$-region tree vertex $(v_1\otimes\ldots \otimes v_k)$ corresponds to the $k$-tree vertex $v=[\mu(v_1),\ldots,\mu(v_k)]$ where $\mu(\cdot)$ is the centroid of the given region. The representative regions of any set $S$ of $k$ cluster centers correspond to a path in the $k$-region tree, and the approximate centers correspond to equivalent path in the $k$-tree. An important thing to note is that Lemma~\eqref{eq:coreset_ftl_apx_opt} proves that there exists a path in the $k$-tree such that the loss of the sequence of approximate centers it contains is close to $\opt$ as described therein.

We will now analyze the run of \MTMW{} on the aforementioned $k$-tree. Let $p^*$ denote the $k$-tree path that corresponds to the best cluster centers in hindsight. 
\begin{lemma}
\label{eq:h_inf_bound}
Using the definitions from Lemmas~\eqref{eq:lme}, \eqref{eq:lmb} we have that
 -$\ln(\mass(p^*)) \le k^2\lme\lmb$
  \end{lemma}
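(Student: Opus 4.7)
The plan is to expand the definition of $\mass$ along the path $p^*$ as a telescoping product and then control the contribution of each time step by relating the out-degree of a vertex in the $k$-tree to the out-degrees of its $k$ component vertices in $T_{\calH_T}$.

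First I would write, using the recursive definition $\mass(v) = \mass(v')/\deg(v')$ starting from $\mass(\text{root})=1$, that
\[
 -\ln \mass(p^*) \;=\; \sum_{t=0}^{T-1} \ln \deg(v_t),
\]
where $v_0, v_1, \ldots, v_T$ are the successive vertices of $p^*$ in the $k$-tree (equivalently, of the corresponding path in the $k$-region tree). Next, using the level-wise tensor product construction of the $k$-region tree, the out-degree at $v_t = (v_t^{(1)} \otimes \cdots \otimes v_t^{(k)})$ factorises as
\[
 \deg(v_t) \;=\; \prod_{i=1}^{k} \deg_{T_{\calH_T}}\!\bigl(v_t^{(i)}\bigr),
\]
because a child in the $k$-region tree is obtained by choosing, independently for every coordinate $i$, a child of $v_t^{(i)}$ in $T_{\calH_T}$. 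Taking logs and interchanging the order of summation gives
\[
 -\ln \mass(p^*) \;=\; \sum_{i=1}^{k} \sum_{t=0}^{T-1} \ln \deg_{T_{\calH_T}}\!\bigl(v_t^{(i)}\bigr).
\]

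For a fixed coordinate $i$, the sequence $\{v_t^{(i)}\}_{t=0}^T$ is a nested chain of regions (each step either keeps the region unchanged, contributing $\deg_{T_{\calH_T}}=1$, or refines it). By Corollary~\eqref{eq:lme}, the number of steps at which a strict refinement actually occurs is at most $\lme$. At every such refinement step the out-degree in $T_{\calH_T}$ is bounded by $\max_{t,R}|S| \le 2^{\lmb}$ by Corollary~\eqref{eq:lmb}, so each non-trivial term contributes at most $\lmb$ (in log base $2$; if natural logs are preferred, an extra $\ln 2$ is absorbed into the $\le$). Therefore $\sum_{t} \ln \deg_{T_{\calH_T}}(v_t^{(i)}) \le \lme \cdot \lmb$ for every coordinate $i$.

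Summing over the $k$ coordinates yields $-\ln \mass(p^*) \le k \cdot \lme \cdot \lmb$, which is in particular bounded by $k^2 \lme \lmb$ (since $k \ge 1$), completing the proof. The only mildly non-trivial step is the tensor-product factorisation of $\deg(v_t)$; once that is in place, all the counting reduces to applying the two structural corollaries of Section~\ref{sub:HRD} coordinate-wise. I do not anticipate any real obstacle beyond being careful about whether the paper's $\lme$ and $\lmb$ are expressed in $\log_2$ or $\ln$, which only affects constants.
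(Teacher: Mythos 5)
Your proof is correct, and it follows the same basic strategy as the paper's: telescope $-\ln\mass(p^*)$ into a sum of log-out-degrees along the path, then use Corollary~\eqref{eq:lme} to count the steps where branching occurs and Corollary~\eqref{eq:lmb} to bound each branching factor. The difference is in the bookkeeping. The paper argues directly on the $k$-tree: $\log\deg(v_t)$ is nonzero at most $k\lme$ times (a $k$-tree node branches whenever any of its $k$ component regions is refined), and each such term is at most $k\lmb$ (the node can have up to $(2^{\lmb})^k$ children), giving $k^2\lme\lmb$. You instead use the tensor-product structure to factorize $\deg(v_t)=\prod_{i=1}^k\deg_{T_{\calH_T}}(v_t^{(i)})$ and account coordinate-wise, which yields the sharper bound $k\lme\lmb$; this trivially implies the stated $k^2\lme\lmb$ and would in fact improve the $k$-dependence of the final regret bound in Theorem~\eqref{thm:apx_reg_runtime} if propagated. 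Your caveat about log bases is the right one to flag and is harmless since $\ln x\le\log_2 x$ for $x\ge 1$.
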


Defining $\frac{\eps^2}{2ak^2\log(T^3\sqrt{d})}\le \epsstar \le \frac{\eps^2}{a k^2\log(T^3\sqrt{d})}$, s.t. $\epsstar=2^i$, where $a\ge {34}^2$ is some constant, gives Theorem~\eqref{thm:apx_reg_runtime}. The complete proof is provided in \refsec{sec:appendix-approx}.

\section{Discussion}
The online \kmeans{} clustering problem has been studied in a variety of settings. In the setting we use, we have shown that no efficient algorithm with sublinear regret exists, even in the Euclidean setting, under typical complexity theory conjunctures, due to \kmeans{} being $APX$-hard \citep{awasthi2015hardness}. We have presented a no-regret algorithm with runtime that is exponential in $k,d$, showing that the main obstacle in devising these algorithms is computational rather than information-theoretic. 

We have shown that \FTL{} with orcale access to the best clustering so far fails to guarantee sublinear regret in the worst case, but performs very well on natural datasets. 
This opens a door for further study, specifically-- what stability constraints on the data stream, such as well separation of clusters, or data points that are IID using well behaved distributions, allow \FTL{} to obtain logarithmic regret?

We presented an algorithm that obtains $\tilde{O}\left(\sqrt{T}\right)$ approximate regret with a runtime of $O(T(k^2\varepsilon^{-2}d^{1/2}\log(T))^{k(d+O(1))})$ using an adaptive variant of \MWUA{} and an incremental coreset construction, which provides a theoretical upper bound for the approximate regret minimization problem. The next steps in this line of research will involve studying lower bounds for this approximate regret minimization problem and providing simpler algorithms such as \FTL{} with a regularizer fit for purpose. Another extension may reduce the dependency on the dimension by performing dimensionality reduction for the data that preserves \kmeans{} cost of clusters, such as the \emph{Johnson-Lindenstrauss tranformation}.

\bibliographystyle{unsrtnat}
\bibliography{main}

\appendix

\section{Proofs of Results from Section~\ref{sec:basic-results}}
\label{sec:appendix-basic}
\subsection{MWUA}

Let's look at the projection of a set of \centre{}s
$\c{}$ to the set of \emph{sites} $S$. This projection is a multiset defined as
$\Pi(\c{},S)=\{\arg\min_{s\in S}(\twonorm{s-\mu})\;|\mu \in \c{}\}$.
Let $C'=\Pi(\c{},S)$ such that $\c{i}^{'}$ is the projected point corresponding to $C_i$. 
We define the projection infinity distance of $\c{}$ onto $S$ as the maximum distance between these pairs, 
i.e. $\max_{i\in 1\ldots |\c{}|}\twonorm{\c{i}^{'} - \c{i}}$, and denote it as $\Vert \c{i} - S \Vert_\infty$.

The following lemma comes from the folklore.
\begin{lemma}
Let $\mu$ be the centroid (mean) of a set of points $P$, and $\hat{\mu}$ another point in space. Then $\loss(\hat{\mu},P) = |P|\cdot\twonorm{\mu-\hat{\mu}}^2 + \loss(\mu,P)$. 
\end{lemma}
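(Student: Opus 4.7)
The plan is to verify this by a direct expansion of the squared distance, using the defining property of the centroid that the sum of deviations from it vanishes. This is the standard parallel-axis / bias-variance decomposition, and no appeal to earlier machinery in the paper is needed.

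First I would unfold the definition: $\loss(\hat{\mu},P) = \sum_{p \in P} \twonorm{p - \hat{\mu}}^2$. The natural move is to insert $\pm \mu$ inside the norm and expand the square, writing
\[
\twonorm{p - \hat{\mu}}^2 \;=\; \twonorm{(p - \mu) + (\mu - \hat{\mu})}^2 \;=\; \twonorm{p - \mu}^2 \;+\; 2\langle p - \mu,\, \mu - \hat{\mu}\rangle \;+\; \twonorm{\mu - \hat{\mu}}^2.
\]
Summing this identity over all $p \in P$, the first term contributes exactly $\loss(\mu, P)$ and the third term contributes $|P|\cdot \twonorm{\mu - \hat{\mu}}^2$, since $\mu - \hat{\mu}$ does not depend on $p$.

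The only nontrivial step is the vanishing of the cross term. Here I would pull the fixed vector $\mu - \hat{\mu}$ out of the inner product to get
\[
\sum_{p \in P} 2\langle p - \mu,\, \mu - \hat{\mu}\rangle \;=\; 2\left\langle \sum_{p \in P}(p - \mu),\; \mu - \hat{\mu}\right\rangle,
\]
and then invoke the defining property $\mu = \frac{1}{|P|} \sum_{p \in P} p$, which gives $\sum_{p \in P}(p - \mu) = 0$. Hence the cross term is zero and we conclude $\loss(\hat{\mu},P) = \loss(\mu,P) + |P| \cdot \twonorm{\mu - \hat{\mu}}^2$, as claimed.

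There is no real obstacle here; the proof is three lines. The only thing to be mindful of is not to confuse the roles of $\mu$ and $\hat{\mu}$ when expanding, and to make it explicit that the centroid property is what kills the linear term (this is precisely why the centroid, as opposed to any other reference point, gives the clean decomposition).
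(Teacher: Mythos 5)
Your proof is correct and is the standard bias--variance (parallel-axis) argument; the paper itself gives no proof, dismissing the lemma as folklore, and your three-line expansion with the cross term killed by $\sum_{p\in P}(p-\mu)=0$ is exactly the argument being implicitly invoked.
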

\begin{corollary}
\label{lemma:eps-move}
Let $P$ be a set of points, $\boldsymbol{\mu}$ the set of $k$ optimal centers, and $\hat{\boldsymbol{\mu}}$ $k$ alternative centers such that $\Vert \boldsymbol{\mu}-\hat{\boldsymbol{\mu}} \Vert_\infty \le \epsilon$ then $\loss(\hat{\boldsymbol{\mu}},P) \le |P|\cdot\epsilon^2 + \loss(\boldsymbol{\mu},P)$
\end{corollary}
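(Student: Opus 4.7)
The plan is to reduce the $k$-center statement to $k$ applications of the preceding single-center lemma, one per cluster, by fixing a convenient (though not necessarily optimal) assignment of points to centers.

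First I would set up notation. Let $\boldsymbol{\mu}=\{\mu_1,\ldots,\mu_k\}$ be the optimal \kmeans{} centers for $P$, and let $P_1,\ldots,P_k$ be the induced partition of $P$, where $P_i$ is the set of points whose nearest center in $\boldsymbol{\mu}$ is $\mu_i$. Let $\hat{\mu}_i$ be the center in $\hat{\boldsymbol{\mu}}$ that is paired with $\mu_i$ by the infinity-distance, so $\twonorm{\mu_i-\hat{\mu}_i}\le\epsilon$ for every $i$. The crucial structural fact I need is that because $\boldsymbol{\mu}$ is optimal for \kmeans{} on $P$, each $\mu_i$ is exactly the centroid of the points assigned to it, i.e.\ of $P_i$; this is the standard centroid optimality condition and is what lets me invoke the preceding lemma in its exact equality form.

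Next, I would upper bound $\loss(\hat{\boldsymbol{\mu}},P)$ by committing to the suboptimal assignment that sends every point in $P_i$ to $\hat{\mu}_i$, which is valid since the loss takes a $\min$ over centers:
\[
\loss(\hat{\boldsymbol{\mu}},P)=\sum_{x\in P}\min_{\hat{\mu}\in\hat{\boldsymbol{\mu}}}\twonorm{x-\hat{\mu}}^2\;\le\;\sum_{i=1}^{k}\sum_{x\in P_i}\twonorm{x-\hat{\mu}_i}^2.
\]
Now apply the preceding lemma to each cluster $P_i$ with centroid $\mu_i$ and alternative point $\hat{\mu}_i$, yielding
\[
\sum_{x\in P_i}\twonorm{x-\hat{\mu}_i}^2 = |P_i|\cdot\twonorm{\mu_i-\hat{\mu}_i}^2 + \sum_{x\in P_i}\twonorm{x-\mu_i}^2 \le |P_i|\cdot\epsilon^2 + \loss(\mu_i,P_i),
\]
using $\twonorm{\mu_i-\hat{\mu}_i}\le\epsilon$. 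Summing over $i$ and using $\sum_i|P_i|=|P|$ together with $\sum_i\loss(\mu_i,P_i)=\loss(\boldsymbol{\mu},P)$ (since this is exactly the optimal assignment) gives the claim.

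The only subtlety is that the preceding lemma requires $\mu_i$ to be the centroid of $P_i$, which holds only because $\boldsymbol{\mu}$ is the \emph{optimal} \kmeans{} solution; if one merely assumed $\boldsymbol{\mu}$ is an arbitrary set of centers and $P_i$ is its induced partition, one would obtain an inequality in the wrong direction for the preceding lemma and the argument would break. So the main (minor) obstacle is simply recognizing and invoking the centroid optimality condition; everything else is an assignment-is-suboptimal bound followed by a clean summation.
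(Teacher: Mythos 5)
Your proof is correct and is exactly the argument the paper leaves implicit: the corollary is stated without proof, and the intended derivation is precisely your per-cluster application of the folklore centroid identity under the suboptimal assignment, using centroid optimality of the optimal \kmeans{} solution to kill the cross term. No gaps.
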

We now turn to the main theorem
\begin{theorem}
Let $S = \{ i \delta ~|~ 0 \leq i \leq \delta^{-1} \}^d \subset \reals^d$ be
the set of \emph{sites} and let $\mathcal{E} = \{ \c{} \subset S ~|~ |\c{}| = k \}$
be the set of experts ($k$-centers chosen out of the sites). Then, for any $0 <
\alpha \leq 1/4$, with $\delta = T^{-\alpha}$, the \MWUA with the expert set
$\mathcal{E}$ achieves regret $\tilde{O}(T^{1 - 2\alpha})$; the per round
running time is $O(T^{\alpha k d})$. 
\end{theorem}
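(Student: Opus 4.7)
The plan is to combine the standard analysis of \MWUA{} for prediction with expert advice, applied to the finite expert set $\mathcal{E}$, with a discretization argument that relates the best expert in $\mathcal{E}$ to the true offline optimum $\opt$ (whose centres may lie anywhere in $[0,1]^d$, not just on the grid).

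First, I would set up \MWUA{} on $\mathcal{E}$ directly. At each round the algorithm maintains a weight for every expert $C \in \mathcal{E}$, samples one according to the induced distribution, and uses its $k$ centres as $\c{t}$. Since losses $\loss(C, x_t) = \min_{c \in C} \twonorm{x_t - c}^2$ lie in $[0, d]$ (the squared diameter of $[0,1]^d$), I would normalize by $d$ to get losses in $[0,1]$ and invoke the standard regret bound (e.g. Theorem~2.1 of \cite{arora2012multiplicative}) with learning rate $\eta = \sqrt{\log N / T}$. This yields expected regret, with respect to the best fixed expert $C^\star \in \mathcal{E}$, of $O(d \sqrt{T \log N})$, where $N = |\mathcal{E}| \le |S|^k = (\delta^{-1}+1)^{dk} = O(T^{\alpha k d})$. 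Hence $\log N = O(\alpha k d \log T)$ and the regret against the best expert in $\mathcal{E}$ is $\tilde{O}(\sqrt{T}) = \tilde{O}(T^{1/2})$.

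Next, I would quantify the gap between the best expert in $\mathcal{E}$ and $\opt$ via a grid-snapping argument. Let $C^{\mathrm{opt}} = \{\mu_1,\dots,\mu_k\} \subset [0,1]^d$ be the optimum in hindsight, and let $\hat C = \Pi(C^{\mathrm{opt}}, S) \in \mathcal{E}$ be its grid projection. For each centre, every coordinate moves by at most $\delta/2$, so $\twonorm{\mu_i - \hat\mu_i} \le \sqrt{d}\,\delta/2$ and thus $\|C^{\mathrm{opt}} - S\|_\infty \le \sqrt{d}\,\delta/2$. Applying Corollary~\ref{lemma:eps-move} with $\epsilon = \sqrt{d}\,\delta/2$ to the prefix $X_{1:T}$ (summing per-cluster contributions) gives
\[
    \sum_{t=1}^T \loss(\hat C, x_t) \;\le\; \opt + T \cdot \tfrac{d\delta^2}{4}.
\]
With $\delta = T^{-\alpha}$ this extra term is $O(d\, T^{1-2\alpha})$.

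Finally, I would combine the two estimates. The expected cumulative loss of \MWUA{} is at most $\sum_t \loss(\hat C, x_t) + O(d\sqrt{T \log N})$, which is at most $\opt + O(d\, T^{1-2\alpha}) + \tilde{O}(T^{1/2})$. Because $\alpha \le 1/4$ we have $1 - 2\alpha \ge 1/2$, so the discretization term dominates and the overall expected regret is $\tilde{O}(T^{1-2\alpha})$. For the running time, each round requires updating one weight per expert and sampling from the induced distribution, which is $O(|\mathcal{E}|) = O(T^{\alpha k d})$. The main conceptual obstacle is making the snapping argument tight under the correct notion of distance (the $\|\cdot\|_\infty$ is defined as the max per-centre $\ell_2$ gap, not a coordinate-wise $\ell_\infty$), and being careful that Corollary~\ref{lemma:eps-move} is applied cluster-by-cluster so that the $|P|\epsilon^2$ penalty is not charged $k$ times; the rest is bookkeeping of the standard \MWUA{} calculation.
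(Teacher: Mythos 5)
Your proposal is correct and follows essentially the same route as the paper: run \MWUA{} over the $O(\delta^{-kd})$ grid experts to get $\tilde{O}(\sqrt{T\log N})$ regret against the best grid expert, then bound the gap between the best grid expert and $\opt$ by projecting the optimal centres onto $S$ and invoking Corollary~\ref{lemma:eps-move} with $\epsilon = \sqrt{d}\,\delta/2$, giving the $\frac{d\delta^2}{4}T = O(dT^{1-2\alpha})$ discretization term that dominates. Your cautionary remarks about the per-centre $\ell_2$ meaning of $\|\cdot\|_\infty$ and about charging the $|P|\epsilon^2$ penalty only once across clusters match how the paper's corollary is actually stated and used.
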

\begin{proof}
Following section (3.9) in \cite{arora2012multiplicative}. The grid distance $\delta$ results with $n=\frac{1}{\delta^d}$ sites hence $N={n \choose k}$ experts, which is $N=O(\delta^{-kd})$. Denote the regret with respect to the grid experts as the \emph{grid-regret}. Running a single step in \MWUA requires sampling and weight update, taking $O(kdN)$ time and the algorithm guarantees a grid-regret of at most $2\sqrt{ln(N)T}=2\sqrt{kd\ln(\delta^{-1})T}$. 

Let $\c{g}=\Pi(\cstar{T+1},S)$ be the closest grid sites to $\cstar{T+1}$. Because $\Vert c_{T+1}^*-\c{g} \Vert_\infty \le \frac{\sqrt{d}}{2} \delta$, \eqref{lemma:eps-move} gives  $\loss(\c{g},X_{1:T})-\loss(\cstar{T+1},X_{1:T})\le \frac{d\delta^2}{4} T$. Hence the regret of the algorithm is at most $2\sqrt{kd\ln(\delta^{-1})T}+\frac{d\delta^2}{4} T$, so choosing $\delta=T^{-\alpha/2}$ for $\alpha\in (0,\frac{1}{2}]$ yields an algorithm with $\tilde{O}(T^{1-\alpha})$ regret and a per step time complexity of $O(T^{\alpha k d /2})$.
\end{proof}

\subsection{Reduction}
The following is a proof for Theorem~\eqref{thrm:offline_reduction}
\begin{proof}
We will reduce the offline problem with point set $P$ to the online problem by generating a stream $X$ of $T$ uniformly sampled points from $P$ and running $\mathcal{A}$ on $X$; $\mathcal{A}$ generates $T$ intermediate cluster centers $\{\c{t}\}^{T}_{t=1}$, and we return the best one with respect to the entire set $P$.

Denote the best offline \kmeans{} solution as $\cstar{}$-- the optimal clustering for the stream $\cstar{T+1}$ may not coincide with the optimal offline clustering $\cstar{}$, but it must perform at least as good as $\cstar{}$ on $X_{1:T}$. Denote $r$ the internal randomness of $\mathcal{A}$. The regret guarantee gives
\begin{align*}
    \bbE_r [\bbE_X[\sum^T_{t=1}\loss(\c{t},x_t) - \loss(\cstar{T+1},X)]] &\le T^\alpha
\intertext{Using the linearity of expectation and noticing $\cstar{T+1}$ doesn't depend on $r$.}
    \sum^T_{t=1}\bbE_r [\bbE_X[\loss(\c{t},x_t)]] &\le T^\alpha + \bbE_X[\loss(\cstar{T+1},X)] \\
    &\le T^\alpha + \bbE_X[\loss(\cstar{},X)]
\intertext{Using $\forall \c{} : \bbE_{x_t}[\loss(\c{},x_t)]=\frac{\loss(\c{},P)}{|P|}$}
    \sum^T_{t=1} \bbE_r [\bbE_X[\frac{\loss(\c{},P)}{|P|}]] &\le T^\alpha + \frac{T\opt}{|P|}
\intertext{Define $\epsilon_t$ s.t. $\bbE_r[\bbE_X[\loss(\c{},P)]]=(1+\epsilon_t)\opt$. Because $\cstar{}$ is optimal, we know $\forall t : \epsilon_t \ge 0$.}
    \sum^T_{t=1} \frac{(1+\epsilon_t)\opt}{|P|} &\le T^\alpha + \frac{T\opt}{|P|}
\intertext{Rearranging}
    \sum^T_{t=1} \epsilon_t &\le \frac{|P|T^\alpha}{\opt}
\intertext{Denote $\epsilon^*=\min_t(\epsilon_t)$}
     T\cdot \epsilon^* &\le \frac{|P|T^\alpha}{\opt} \\
     \epsilon^* &\le \frac{|P|T^{\alpha-1}}{\opt}
\end{align*}
	So provided $\opt^{-1}=\mathrm{poly}(|P|)$ one can choose $T=\mathrm{poly}(|P|)$ s.t. $\epsilon^*$ is arbitrarily small. $\epsilon^*$ is a non-negative random variable, hence this condition suffices to produce an approximation algorithm with arbitrary $\epsilon$ for \kmeans{}. \cite{awasthi2015hardness} shows that this is \NPHARD{}, finishing the proof.
\end{proof}

\subsection{\FTL{}}
The following is a proof for Theorem~\eqref{thrm:ftl}
\begin{proof}
	We will present an algorithm that generates a stream of points on a line, for $k=2$ and any value of $T$, such that the regret \FTL{} obtains for the stream can be bounded from below by $c\cdot T$ where $c$ is some constant. Extending the result to arbitrary $k$ can be done by contracting the bounding box where the algorithm generates points by a factor of $2k$, and adding data points outside of it in $k-2$ equally spaced locations, to force the creation of $k-2$ clusters, one for each location.

The stream will consist of points in 3 locations $-\delta, 0, (1-\delta)$ for $\delta<\frac{1}{4}$. We will call $\c{t}$ a ($-\delta$)-clustering if it puts all the points at ($-\delta$) in one cluster and the rest in the other cluster, and a ($1-\delta$)-clustering puts all the point at ($1-\delta$) in one cluster and the rest in the other cluster. We will define a stream that has a ($1-\delta$) optimal $\cstar{T+1}$ clustering. 

We will keep the amount of points in $0$ and ($-\delta$) equal up to a difference of 1 at any step by alternative between the two any time we put points in one of them. There exists $n^*=f(\delta)=O(\delta^{-2})$ such that if there is one point at ($1-\delta$) and $n^*$ points in each of $0,-\delta$ then the optimal clustering is the ($1-\delta$)-clustering and for $n^*+1$ points in each of $0,-\delta$ the optimal clustering is the ($-\delta$)-clustering.

Our algorithm will start with a point in ($1-\delta$) making $\c{t}$ a ($-\delta$)-clustering. The next points will be added as follows-- as long as $\c{t}$ is a ($1-\delta$)-clustering add a point to ($-\delta$) or 0, balancing them; if $\c{t}$ has just become a ($-\delta$)-clustering, the next point will be ($1-\delta$), inflicting an loss for \FTL{} which depends only on $\delta$ hence it is $O(1)$, and making $\c{t+1}$ a ($1-\delta$)-clustering again. Halt when we have $T$ points. 

When the algorithm halts a ($1-\delta$)-clustering is either the optimal clustering or is at most $O(1)$ below the optimal clustering, so we will say that $\cstar{T+1}$ is a ($1-\delta$)-clustering without loss of generality. \FTL{} will jump from ($-\delta$)-clustering to ($1-\delta$)-clustering $O(T/n^*)$ times, and suffer a loss of at least the loss $\cstar{T+1}$ incurs at that step (because we balance ($-\delta$) and $0$, \FTL moves the lower cluster away from the middle hence suffer a slightly larger loss than $\cstar{T+1}$ at the next stage). This means that the regret is larger than $O(T/n^*)$ which is bounded from below  by a linear function in $T$ for a fixed $\delta$.
\end{proof}

\section{Proofs of Results from Section~\ref{sec:approx-regret}}
\label{sec:appendix-approx}
\subsection{Incremental Coreset}
The following is a proof for Lemma~\eqref{lem:coreset_lemma}
\begin{proof}
  Note that by the definition of the algorithm, each set
  $\setS{t}_i \in \{\setS{t}_1,\ldots,\setS{t}_s\}$ contains at most $O(k \log^2 T)$
  centers. 
  It follows that, by the definition
  of the algorithm, any $\setS{t}_i \in \{\setS{t}_1,\ldots,\setS{t}_s\}$
  is such that 
  $$|\sample(\setS{t}_i)| =
  \sum_{c\in \setS{t}_i,j \in [\log T],u\in [\log T]} |\sample(\apxsol, c, j, u)|
  \le  O(k \zeta \log^3 T)$$
  The overall bound follows
  from the fact that $s = O(\log T)$. 
  Moreover,
  the fact that  at any time $t$, we have that $\sample(\setS{t}_i) \subseteq \sample(\setS{t'}_i)$ for $t' \ge t$ follows from
  Theorem~\eqref{thm:charikar} and the definition of the algorithm.

  We then argue that $\coreset{t}$ contains a $(1+\epsc)$-coreset.
  Recall that there exists 
  a set $\apxsol \in \{\setS{t}_1,\ldots,\setS{t}_s\}$ that induces a
  bicriteria $(O(\log^2 T), O(1))$-approximation to the \kmeans problem.
  Thus, 
  for a given center $c \in \apxsol$, for any $j$, the $j\th{}$
  ring of $c$ is sampled appropriately (up to a factor 2), for the
  value $u$ which is such that $2^{u-1} \le \log T(c,j,t) < 2^u$ where 
  $n(c,j,t)$ is the total number of points in the $j\th{}$ ring of $c$
  (in solution $\apxsol$) at time $t$.
  Thus, combining this observation with Theorem~\eqref{thm:chen}, we have
  that $\sample(\apxsol) \subseteq \coreset{t}$
  indeed contains a set of points that is
  a $(1+\epsc)$-coreset.
\end{proof}

\subsection{Hierarchical Region Decomposition}
The following is a proof for Lemma~\eqref{eq:bounded_region_count}
\begin{proof}  
At any time step $t$, each point $x_t$ corresponds to a refinement criteria at time $t$ over regions in the \emph{Full Grid Tree}, namely, $\refinementCrtr{\cdot}$, s.t. there exists a frontier (i.e. the leaves of a subtree of the Full Grid Tree) that separates the vertices from above, which have $\refinementCrtr{R}=\mathrm{False}$, and the vertices below have $\refinementCrtr{R}=\mathrm{True}$. This frontier can be thought of as the minimal refinement requirement along each path in the Full Grid Tree that corresponds to adding $x_t$. In order to satisfy the requirements of all the points in the stream and have that all the regions in $\calR_t$ have $\refinementCrtr{R}=\mathrm{True}$ (for the corresponding time steps $t'\le t$) the algorithm iterates the existing frontier in the Full Grid Tree that corresponds to the current region decomposition and extends it to match the requirements due to $X_{1:t-1}$ together with the new minimum requirements introduced by the new point. Hence the algorithm removes a subset of regions from the $\calR_{t-1}$ and replaces them with a subset of the frontier that corresponds to $x_t$. We now turn to prove that the frontier in the Full Grid Tree of any point in space is of size $(9\sqrt{d}/\epshrd)^{d} \log (T^3)$, proving the lemma.

  Let $x$ denote an arbitrary point in space whose frontier size we are trying to bound. The frontier is made up of an area in space close to $x$ that contains only regions of the minimum diameter $\mindiameter_T$, where we use $T$ instead of $t$ as it lower bounds for the diameter. When the distance from $x$ is larger than $r_{hrd}= 2\mindiameter_T/\epshrd$ the dominant term in the refinement criteria is $\epshrd \cdot r/2$, hence this is a hypersphere of radius $r_{hrd}$ centered around $x$.
  
  Outside of the $r_{hrd}$ sphere we will have thick shells with inner radius $r_{hrd}\cdot 2^i$ and outer radius of $r_{hrd}\cdot 2^{i+1}$ where i is some nonnegative integer, where the refinement criteria requires that the maximum diameter will be $\mindiameter_T\cdot 2^i$.
    
  Consider a bounding box for the shell that corresponds to $i\ge 0$. The sides of such a hypercube are of length $2\cdot r_{hrd}\cdot2^{i+1}$. In order to partition this hypercube to regions of diameter $\mindiameter_T\cdot 2^i$ we require $(8\sqrt{d}/\epshrd)^d$ regions. 
  If we iterate the shells from the outermost shell inward, we can assume that a sphere twice as large as shell $i$ was already partitioned to regions with half the resolution of shell $i$ before iterating shell $i$. 
  
  Now we can account for the fact that the spheres, of radius $r$, are not aligned with the regions of the Full Grid Tree for the corresponding resolution/depth. Let us extend the bounding box of shell $i$ such that it is aligned with the full grid in the resolution of the next shell ($i+1$). This means that the edge length of the box is enlarged by at most two units of edge length $\mindiameter_T\cdot 2^i/\sqrt{d}$ from each side, resulting in $(4 + 8\sqrt{d}/\epshrd)^d\le (9\sqrt{d}/\epshrd)^d$ regions, for sufficiently small $\epshrd$. 
  
  There are at most $\log(1/r_{hrd})=\log (2T^3)$ shells, and noticing that the innermost shell accounts for the core, this gives the bound.
\end{proof}

The following is a proof for Lemma~\eqref{def:local_BTL}
\begin{proof}  
  Fix $\tau$ and focus on $c_i$ the cluster center in $S$ closest to $x_{\tau}$, and its corresponding $c'_i$.
  
  consider the region $R$ containing $c_i$.  
  If $R$ also contains $x_{\tau}$ then, since $x_{\tau}$ has been inserted
  to the stream and so $R$ has diameter of at most
  $\mindiameter_t$. Because $c'_i$ is also contained in this region the weighted loss is at most $\tau\cdot (\frac{\epshrd}{2\tau^3})^2$.

  Thus, we turn to the case where $R$ does not contain $x_{\tau}$, hence $||x_{\tau} - c_i|| > 0$.
  Denote $r = ||x_{\tau}-c_i||$.
  By definition of the algorithm and again since
  $x_{\tau}$ has already been inserted to the stream,
  this region must be a hypercube
  with a diameter of at most $\Delta R \le \max(\epshrd \cdot r/2, \frac{\epshrd}{2\tau^3})$
  since otherwise, the region is refined again when $x_{\tau}$ is inserted.
  Cauchy Schwartz gives us
  $$||x_{\tau}-c'_i||^2 \le ||x_{\tau}-c_i||^2+2||x_{\tau}-c_i||\cdot ||c_i-c'_i||+||c_i-c'_i||^2 \le r^2 +r \cdot \Delta R +(\Delta R)^2$$
  If $r \le 1/\tau^3$ then $\Delta R \le \epshrd/2\tau^3$, hence
  \begin{align*}
   ||x_{\tau}-c'_i||^2 &\le ||x_{\tau}-c_i||^2 + \epshrd/\tau^6 
  \intertext{Otherwise, $\epshrd\cdot r/2 > \frac{\epshrd}{2\tau^3}$ hence $\Delta R \le \epshrd \cdot r/2$}
   ||x_{\tau}-c'_i||^2 &\le ||x_{\tau}-c_i||^2(1 + \epshrd)
  \intertext{Hence, accounting for the $\tau$ weight one gets}
   \loss(S',x_{\tau}) &\le \loss(S,x_{\tau}) (1+\epshrd)+ \epshrd/\tau^5
  \end{align*}

\end{proof}

\subsection{\MTMW{}}
The following is a proof for Lemma~\eqref{eq:pres_mass}
\begin{proof}
We will show a proof by induction on the subtree height $h$. For $h=1$ we have $\tilde{V}=\{v\}$ hence the property holds. For $h+1$ we can denote the children of $v$ as $U$ and use the induction hypothesis for each child's subtree and get that 
$$\sum_{v'\in\tilde{V}}\mass(v')=\sum_{v'\in U}\mass(v')=\sum_{v'\in U}\mass(v)/|U|=\mass(v)$$ 
Where the last equality used the recursive mass definition.
\end{proof}

The following is a proof for Theorem~\eqref{thm:normalized_mtmw}
\begin{proof}
For a rooted path $p=(v_1\ldots v_{T})$ define the \emph{uniform weight of $p$ at time $t$} with respect to the stream $X_{1:t}$, as $\unifweight{p}{t}=\prod_{\tau=1}^{t-1}(1-\eta \normloss(v_{\tau},x_{\tau}))$, which corresponds to the weight \MWUA{} with uniform initial weights would associate that expert $p$ before witnessing $x_t$. We extend this notation for any rooted path $p$, as long as it has length at least $t$. In our modified algorithm, the weight associated to expert $p$ at step $t$ is $w^{(t)}_p=\mass(p)\unifweight{p}{t}$.  Denote $p(v)$ the path from the root to vertex $v$.
Using the modified initial weight, the probability \MWUA{} will output the prediction that corresponds to any node $v_t$ at depth $t$ at step $t$, due to choosing some path that contains it, is given by (before normalizing)
	$$ w^{(t)}_{v_t}=\sum_{p\in \paths(\tree_T): v_t\in p}w^{(t)}_p=\sum_{p\in \paths(\tree_T): v_t\in p} \unifweight{p}{t}\cdot \mass(p) = \unifweight{p(v_t)}{t}\cdot \mass(v_t) $$
The weight and mass are functions of known quantities at step $t$, where the equality is from the definition of $\unifweight{p}{t}$ and due to Lemma~\eqref{eq:pres_mass}.
Following the proof of Theorem (2.1) of \cite{arora2012multiplicative}, we define $$\Phi^{(t)}=\sum_{p\in\paths(\tree_T)}w^{(t)}_p \quad\quad (\textbf{m}^{(t)})_p=\normloss(p,x_t) \quad\quad (\textbf{p}^{(t)})_p\propto w_p^{(t)}$$
The potential, the loss vector, and the normalized probability vector, respectively.
For any path $p$, due to the fact $\Phi^{(1)}=1$ we can modify Equation (2.2) in \citep{arora2012multiplicative} to
\begin{align*}
\Phi^{(T+1)}/\mass(p) &\le (1/\mass(p))\exp(-\eta\sum_{t=1}^T\textbf{m}^{(t)}\cdot\textbf{p}^{(t)} )
\intertext{Using the weight of $p$ after the last step as lower bound for $\Phi^{(T+1)}$, we change Equation (2.4) to}
\Phi^{(T+1)}/\mass(p) &\ge w_{p}^{(T+1)}/\mass(p) = \unifweight{p}{T+1}
\end{align*}
Hence the rest of the proof is left intact, where $n$ (the amount of experts) is replaced with $(1/\mass(p))$, so the regret changes to
$$\sqrt{-T\ln(\mass(p)))}$$
The running time is composed of sampling a path which is done by iterating the vertices of depth $t$ and calculating their weights, which finishes the proof.
\end{proof}

\subsection{Combining the Components}
The following is a proof for Lemma~\eqref{eq:coreset_ftl_apx_opt}
\begin{proof}
Denote the times where $\tS_{t}\neq \tS_{t-1}$ as $t_1,t_2\ldots t_{T_0}$, where we consider $t_1=1$ for this purpose, and $T_0$ is the amount of different time steps where this occurs. For ease of notation we denote $t_{T_0+1}=T+1$. Furthermore, we use $X_{[a:b)}$ to denote $X_{a:b-1}$. As $\tS_{t}=\tS_{t-1}$ for any other time step we have that $\tS_{t_{i+1}-1}=\tS_{t_i}$ hence
\begin{align*}
\sum_{t=1}^T\loss(\tS_{t}, x_{t}) =& 
\sum_{i=1}^{T_0}\loss(\tS_{t_i}, X_{[t_i:t_{i+1})})\le 
\sum_{i=1}^{T_0}\loss(\tS_{t_{i+1}-1}, X_{[t_i:t_{i+1})})
\intertext{Excluding the $T_0$ points that resulted with a region refinement from the sum, and using the fact that the loss is bounded by $d$}
\sum_{t=1}^T\loss(\tS_{t}, x_{t}) \le &
\sum_{i=1}^{T_0}\loss(\tS_{t_{i+1}-1}, X_{[t_i:t_{i+1}-1)}) + dT_0\\
\le & \sum_{i=1}^{T_0}\left(\loss(\tS_{t_{i+1}-1}, X_{[1,t_{i+1}-1)})-\loss(\tS_{t_{i+1}-1}, X_{[1,t_{i})})\right) + dT_0
\intertext{As the loss is nonnegative}
\sum_{t=1}^T\loss(\tS_{t}, x_{t}) \le &
\sum_{i=1}^{T_0}\left(\loss(\tS_{t_{i+1}-1}, X_{[1,t_{i+1}-1)}) - \loss(\tS_{t_{i+1}-1}, X_{[1,t_i-1)})\right) +dT_0
\intertext{Using the coreset property}
\sum_{t=1}^T\loss(\tS_{t}, x_{t}) \le &
\sum_{i=1}^{T_0}(1+\epsc)\loss(\tS_{t_{i+1}-1}, \chi(X_{[1,t_{i+1}-1)})) - \\
& \sum_{i=1}^{T_0} (1-\epsc)\loss(\tS_{t_{i+1}-1}, \chi(X_{[1,t_i-1)})) + dT_0
\intertext{As the Hierarchical Region Decomposition $\calH_{t_{i+1}-1}$ was constructed according to a superset of $\chi(X_{1:t_{i+1}-2})$ and $\chi(X_{1:t_i-2})$, one can use Corollary~\eqref{eq:btl_apx_opt} and get}
\sum_{t=1}^T\loss(\tS_{t}, x_{t}) \le &
 \sum_{i=1}^{T_0}(1+\epsc)(1+\epshrd)\loss(S^*, \chi(X_{[1,t_{i+1}-1)})) - \\
& \sum_{i=1}^{T_0}(1-\epsc)(1-\epshrd)\loss(S^*, \chi(X_{[1:t_i-1)})) + dT_0+2\epshrd
\intertext{Now the series telescope, and along with $\epsc,\epshrd<1$ rearranging gives}
\sum_{t=1}^T\loss(\tS_{t}, x_{t}) \le &
\sum_{i=1}^{T_0}((\epsc+\epshrd)\loss(S^*, \chi(X_{[1,t_{i+1}-1)}))) +\loss(S^*, \chi(X_{[1,T)})) + 2dT_0
\intertext{As the loss is nonnegative we can extend the substreams until T, and using the coreset property}
\sum_{t=1}^T\loss(\tS_{t}, x_{t}) \le &
((1+\epsc) +  8 T_0 (\epsc+\epshrd))\loss(S^*, X_{1:T}) + 2 dT_0
\end{align*} 
finally, with Lemma~\eqref{eq:lme} we have that $T_0 \le k\cdot \lme$, which finishes the proof.
\end{proof}

The following is a proof for Lemma~\eqref{eq:h_inf_bound}
  \begin{proof} 
  Using Lemma~\eqref{eq:lme} we can see that $\log(\mathrm{deg}(v_t))$ can take a non zero value at most $k\lme$ times, each is bounded by $k\cdot \lmb$ as each node can be replaced by $(\maxBranch)^k$ children nodes in the $k$-tree.
  \end{proof}

The following is a proof for Theorem~\eqref{thm:apx_reg_runtime}
\begin{proof}
Using the bounds for $\epshrd,\epsc$ we have that
\begin{align*}
\lme=\log(\frac{2T^3\sqrt{d}}{\epshrd})& \le \log(\frac{4adT^6k^2}{\eps^2}) \le 2\log(\frac{akT^3\sqrt{d}}{\eps^2})
\intertext{Next we will show that $\epshrd$ is of the same order as $\eps/k \lme$}
\epshrd \cdot k \lme & \le 
\frac{\eps^2}{ak^2\log(T^3\sqrt{d})} \cdot  2k \log(\frac{a k\sqrt{d} T^3}{\eps^2}) \le \\
& \le \frac{2\eps^2}{ak\log(T^3\sqrt{d})} \log(T^3\sqrt{d})k\log(\frac{a}{\eps^2}) \le 4(\frac{\eps}{\sqrt{a}})^2 \log(\frac{\sqrt{a}}{\eps}) \le \frac{2\eps}{\sqrt{a}}
\end{align*}
As stated in Corollary~\eqref{eq:lmb}, for sufficiently large $T$ we have $\lmb < d \cdot \lme$, hence, along with Lemma~\eqref{eq:h_inf_bound}, we have that \MWUA{} has a regret w.r.t. the best path of 
$k\lme\sqrt{dT}\le 2k\log(\frac{akT^3\sqrt{d}}{\eps^2})\sqrt{dT}$. 
For $a \ge {34}^2$ we get $\frac{2\eps}{\sqrt{a}} \le \frac{\eps}{17}$, which makes Lemma~\eqref{eq:coreset_ftl_apx_opt} bound the $\eps$-Approximate Regret of the best path. hence the final regret of
$$O\left(k\log\left(\frac{akT^3\sqrt{d}}{\eps^2}\right)\sqrt{d^3T}\right) + \eps \cdot \opt $$ 

Furthermore, as $|\calQ_T|\le k^2\log^4(T)\epsc^{-4} = O(k^{10}\log^8(T)\log^4(d)\eps^{-8})$, using Lemma~\eqref{eq:bounded_region_count} with $N=|\calQ_T|$ we can bound the $k$-tree vertices by the amount of leaves times the depth $T$, and have that the runtime is 
$$O(T(|\calQ_T|(32\sqrt{d}/\epshrd)^d\log(T^3))^k)=T \cdot O(k^{10}\log^9(T)\log^4(d)\eps^{-8})^k O(\sqrt{d} k^2\log(T)\eps^{-2})^{dk}$$
\end{proof}

\end{document}